
\documentclass{article}

\usepackage{microtype}
\usepackage{graphicx}
\usepackage{subfigure}
\usepackage{booktabs} 

\usepackage{hyperref}


\usepackage[accepted]{icml2020}

\usepackage{enumerate}
\usepackage{amsfonts}
\usepackage{amsmath}
\usepackage{amsthm}
\usepackage{amssymb}
\usepackage{verbatim}
\usepackage{mathrsfs}
\usepackage{multicol}
\usepackage{pifont}

\usepackage{tikz}
\usetikzlibrary{decorations.markings}
\usetikzlibrary{arrows}
\tikzset{>=latex}

\newcommand{\cmark}{\ding{51}}%
\newcommand{\bcmark}{\ding{52}}%
\newcommand{\xmark}{\ding{55}}%

\DeclareMathOperator{\D}{\mathbb{D}}
\DeclareMathOperator{\M}{\mathbb{M}}
\DeclareMathOperator{\DL2}{\mathscr{D}_{L^2}}
\DeclareMathOperator{\hD}{\hat{\D}}

\theoremstyle{plain}
\newtheorem{thm}{Theorem}

\newtheorem{lemma}[thm]{Lemma}

\usepackage{color}

\linespread{0.99}


\icmltitlerunning{Fuzzy c-Means Clustering for Persistence Diagrams}

\begin{document}

\twocolumn[

\icmltitle{Fuzzy c-Means Clustering for Persistence Diagrams}



\icmlsetsymbol{equal}{*}

\begin{icmlauthorlist}
\icmlauthor{Thomas Davies}{soton}
\icmlauthor{Jack Aspinall}{ox}
\icmlauthor{Bryan Wilder}{har}
\icmlauthor{Long Tran-Thanh}{war}
\end{icmlauthorlist}

\icmlaffiliation{soton}{Electronics and Computer Science, University of Southampton, UK}
\icmlaffiliation{ox}{Department of Materials, University of Oxford, UK.}
\icmlaffiliation{har}{Center for Research on Computation and Society, John A. Paulson School of Engineering and Applied Sciences, Harvard University, USA.}
\icmlaffiliation{war}{Department of Computer Science, University of Warwick, UK.}

\icmlcorrespondingauthor{Thomas Davies}{t.o.m.davies@soton.ac.uk}

\icmlkeywords{Topological Data Analysis, Fuzzy Clustering, Machine Learning}

\vskip 0.3in
]



\printAffiliationsAndNotice{}  

\begin{abstract}
Persistence diagrams concisely represent the topology of a point cloud whilst having strong theoretical guarantees, but the question of how to best integrate this information into machine learning workflows remains open. In this paper we extend the ubiquitous Fuzzy c-Means (FCM) clustering algorithm to the space of persistence diagrams, enabling unsupervised learning that automatically captures the topological structure of data without the topological prior knowledge or additional processing of persistence diagrams that many other techniques require. We give theoretical convergence guarantees that correspond to the Euclidean case, and empirically demonstrate the capability of our algorithm to capture topological information via the fuzzy RAND index. We end with experiments on two datasets that utilise both the topological and fuzzy nature of our algorithm: pre-trained model selection in machine learning and lattices structures from materials science. As pre-trained models can perform well on multiple tasks, selecting the best model is a naturally fuzzy problem; we show that fuzzy clustering persistence diagrams allows for model selection using the topology of decision boundaries. In materials science, we classify transformed lattice structure datasets for the first time, whilst the probabilistic membership values let us rank candidate lattices in a scenario where further investigation requires expensive laboratory time and expertise.
\end{abstract}

\section{Introduction}


Persistence diagrams, a concise representation of the topology of a point cloud with strong theoretical guarantees, have emerged as a new tool in the field of data analysis \citep{comptopbook}. Persistence diagrams have been successfully used to analyse problems ranging from identifying financial crashes \citep{crash} to analysing protein bindings \citep{protein}, but how to best integrate them into machine learning workflows remains uncertain due to the non-Hilbertian nature of persistence diagram space. Current efforts can be roughly split into three categories: (i) embedding persistence-diagrams into a Hilbert space which can be directly integrated into current machine learning and statistical techniques; (ii) adding topological error terms to loss functions; and (iii) unsupervised methods directly on the space of persistence diagrams. In this paper, we contribute to the latter category by extending the Euclidean Fuzzy c-Means (FCM) clustering algorithm \citep{BezdekPaper} to the space of persistence diagrams. It is widely accepted that many real-world datasets are not clearly delineated into hard categories \citep{frand}. Thus any algorithm that accounts for this is desirable, as evidenced by the large number of publications studying and using fuzzy clustering algorithms \citep{fuzzyreview, ex1, fuzzy2}. Our algorithm enables practitioners to study the fuzzy nature of data through a topological lens for the first time.

We perform the convergence analysis for our algorithm, giving the same guarantees as traditional FCM clustering: that every convergent subsequence of iterates tends to a local minimum or saddle point. As the space of persistence diagrams has far weaker theoretical properties than Euclidean space, this extension is non-trivial: see Section 1.2 and Appendix A for more details. As this guarantee could lead to non-convergence in practice, we empirically evaluate convergence on a total of 825 randomly generated persistence diagrams and find that our algorithm converges every time. We evaluate our algorithm using a variety of distances on persistence diagrams with the fuzzy RAND index \citep{frand}, a standard measure of cluster quality. We find that we fall into the standard paradigm, whereby distances that take longer to compute result in higher quality clustering, whereas approximations result in lower quality clusters but can be computed faster. By enabling the use of these distances, we can take advantage of the computational speed ups of vectorisations or approximate distances, whilst still having an end product in persistence diagram space. 

We demonstrate the practical value of our fuzzy clustering algorithm by using it to cluster datasets that benefit from both the topological and fuzzy nature of our algorithm. We apply our technique in two settings: (i) lattice structures in materials science; and (ii) pre-trained model selection. The motivation of the former is as follows: A key property for machine learning in materials science has been identified as ``invariance to the basis symmetries of physics [...] rotation, reflection, translation'' \citep{Schmidt2019}. Geometric clustering algorithms do not have this invariance, but persistence diagrams do, making them ideally suited for this application; we can cluster transformed lattice structure datasets where geometric equivalents fail. In addition to this, our probabilistic membership values allow us to rank the top-$k$ most likely lattices assigned to a cluster. This is particularly important in materials science, as further investigation requires expensive laboratory time and expertise. Our second application is inspired by previous work showing that pre-trained deep learning models perform better on tasks if they have topologically similar decision boundaries \citep{tda_db}. As one model can perform well on multiple tasks, this is a naturally fuzzy problem, and so ideally suited to our algorithm. We use our method to cluster models and tasks by the persistence diagrams of their decision boundaries. Not only is our algorithm able to successfully cluster models with the correct task, based just on the topology of its decision boundary, but we show that higher membership values imply better performance on unseen tasks. This further uncovers the link between the topology of decision boundaries and the generalisation power of machine learning.

\subsection{Related work}

\textbf{Means of persistence diagrams.} Our work relies on the existence of statistics in the space of persistence diagrams. \citet{milyenko} first showed that means and expectations are well-defined in the space of persistence diagrams. Specifically, they showed that the Fréchet mean, an extension of means onto metric spaces, is well-defined under weak assumptions on the space of persistence diagrams. \citet{Turner2012FrchetMF} then developed an algorithm to compute the Fréchet mean. 
However, the combinatoric nature of their algorithm makes it computationally intense. There is a relevant line of research for speeding up the computation of means and barycentres. 
In particular, \citet{lacombe} framed the computation of means and barycentres in the space of persistence diagram as an optimal transport problem, allowing them to use the Sinkhorn algorithm \cite{sinkhorn} for fast computation of approximate solutions. Techniques to speed up the underlying matching problem fundamental to our computation have also been proposed by \citet{pb} and \citet{geompd}. Our fuzzy clustering algorithm can integrate these solutions to further speed up its computing time if necessary.

\textbf{Learning with persistence-based summaries.} Integrating diagrams into machine learning workflows remained challenging even with well-defined means, as the space is non-Hilbertian \citep{pdisometry}. As such, efforts have been made to map diagrams into a Hilbert space; primarily either by embedding into finite feature vectors \citep{tropical, fabio1, persimages} or functional summaries \citep{persland, rieck3}, or by defining a positive-definite kernel on diagram space \citep{stabkern, slicekern, fishkern}. These vectorisations have been integrated into deep learning either by learning parameters for the embedding \citep{hofer1,perslay,landlay,learnweight, persbag}, or as part of a topological loss or regulariser \citep{chenreg, toplayer, seg5, topauto}. However, the embeddings used in these techniques deform the metric structure of persistence diagram space \citep{dist1, dist2, bilin}, potentially leading to the loss of important information. In comparison, our algorithm acts in the space of persistence diagrams so it does not deform the structure of diagram space via embeddings. However, as an unsupervised learning algorithm, our algorithm is intended to complement these techniques, offering a different approach for practitioners to use, rather than directly competing with them. Our algorithm offers value when practitioners require: fuzzy topological data processing; to stay in the space of persistence diagrams (e.g., to allow for a series of operations on persistence diagrams to take place); or unsupervised learning with persistence diagrams in a continuous way.


\textbf{Hard clustering.} \citet{inproceedings} gave an algorithm for hard clustering persistence diagrams based on the algorithm by \citet{Turner2012FrchetMF}. As mentioned earlier, many real-world datasets are not clearly delineated into hard categories \citep{frand}, and so a fuzzy algorithm would naturally be chosen over a hard clustering algorithm when dealing with such datasets. However, we believe our work offers further advantages over hard clustering:

\begin{enumerate}[(i)]
    \item Fuzzy membership values have been shown to be more robust to noise than discrete labels \citep{fuzzy}.
    \item Unlike hard clustering, fuzzy clustering is analytically differentiable, allowing integration of the fuzzy clustering step into deep learning methods \citep{bryan}.
    \item To the best of our knowledge, our implementation is the only publicly available clustering algorithm for persistence diagrams.
\end{enumerate}

\begin{figure*}[t!]
    \centering
    \includegraphics[width=\textwidth]{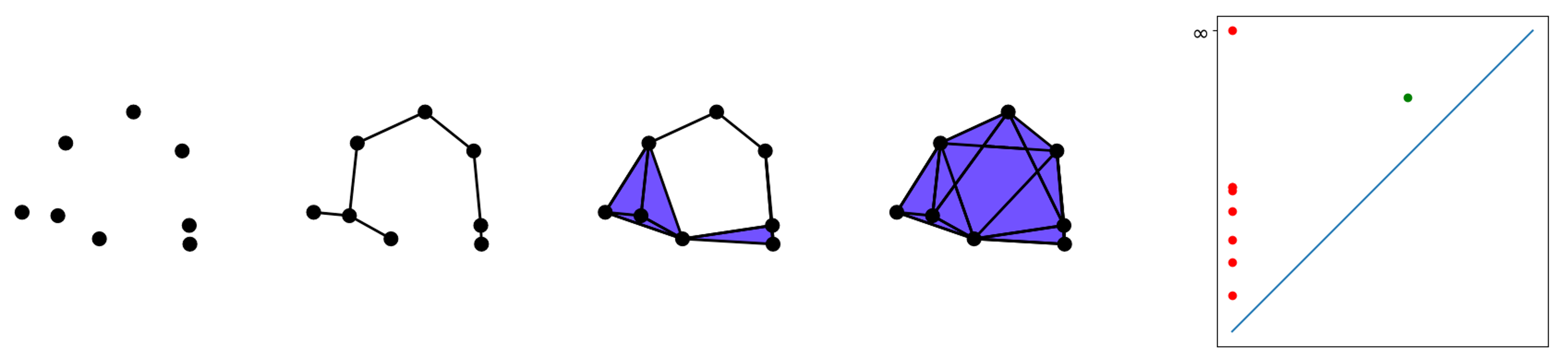}
    \caption{On the left is a Vietoris-Rips filtration, shown with increasing values for $\epsilon$. The convex hull of a subset $X$ of points is included when all points in $X$ are pairwise within $\epsilon$ of each other. On the right is the persistence diagram of the filtration. Each red point is in the 0-persistence diagram, and represents a connected component. The green point is in the 1-persistence diagram and represents the hole.}
\end{figure*}

\textbf{Geometric equivalents.} The most similar unsupervised learning technique to our algorithm is Wasserstein Barycentre Clustering (WBC). We compare our algorithm experimentally to WBC using ADMM \citep{ADMM}, Bregman ADMM \citep{BADDM}, Subgradient Descent \citep{sinkhorn}, Iterative Bregman Projection \citep{IBP}, and full linear programming \citep{linprog}. WBC clusters datasets of point clouds by the Wasserstein distance between the point clouds, rather than the Wasserstein distance between their persistence diagrams. Fuzzy discrete distribution clustering \citep{fwbc} offers similarities to our algorithm, but the addition of the diagonal with infinite multiplicity in persistence diagram space makes the theory distinct.

\subsection{Our contributions}

\par 1. Our main contribution is an algorithm for Fuzzy c-Means clustering of persistence diagrams, along with the convergence analysis. Given a collection of persistence diagrams $\mathbb{D}_1,\dots, \mathbb{D}_n$, we alternatively calculate cluster centres $\mathbb{M}_1,\dots, \mathbb{M}_c$ and membership values $r_{jk} \in [0,1]$ which denote the degree to which diagram $\mathbb{D}_j$ is associated with cluster $\mathbb{M}_k$. We prove Theorem~\ref{thm: local convergence}, showing that every convergent subsequence of these alternative update steps tends to a local minimum or saddle point of the cost function. This is the same convergence guarantee provided by traditional FCM clustering \citep{repairs}, but requires significant additional work as the space of persistence diagrams with the Wasserstein distance has far weaker theoretical properties than Euclidean space.

\par 2. Updating the cluster centres requires computing the weighted Fréchet mean. We extend the algorithm given by \citet{Turner2012FrchetMF} to the weighted case, justifying our addition of weights by extending their proof to show that the updated algorithm converges. As well as enabling the fuzzy clustering, this gives the first algorithm to compute weighted barycentres of persistence diagrams.

\par 3. We evaluate the empirical behaviour of our algorithm, and apply it to two datasets. Our work in materials science uses persistence diagrams to solve a known issue with machine learning in materials science, whilst clustering the persistence diagrams of decision boundaries gives further evidence that the topology of the learnt decision boundaries provides valuable information about the generalisation power of machine learning algorithms. Moreover, it gives a working example of clustering many pre-trained models and tasks together, allocating model to task via the learnt cluster centres.

\par 4. We implement our algorithm in Python, available on GitHub\footnote{\url{https://github.com/tomogwen/fpdcluster}}. It works with persistence diagrams from commonly used open-source libraries for Topological Data Analysis (TDA),\footnote{\href{https://github.com/mrzv/dionysus}{Dionysus} and \href{https://ripser.scikit-tda.org/}{Ripser} \citep{ripser}.} so is available for easy integration into current workflows, offering a powerful unsupervised learning algorithm to data science practitioners using TDA. As far as we can tell, this is the only clustering algorithm for persistence diagrams that is publicly available.

\begin{figure*}[ht!]
\centering
  \includegraphics[width=0.9\textwidth]{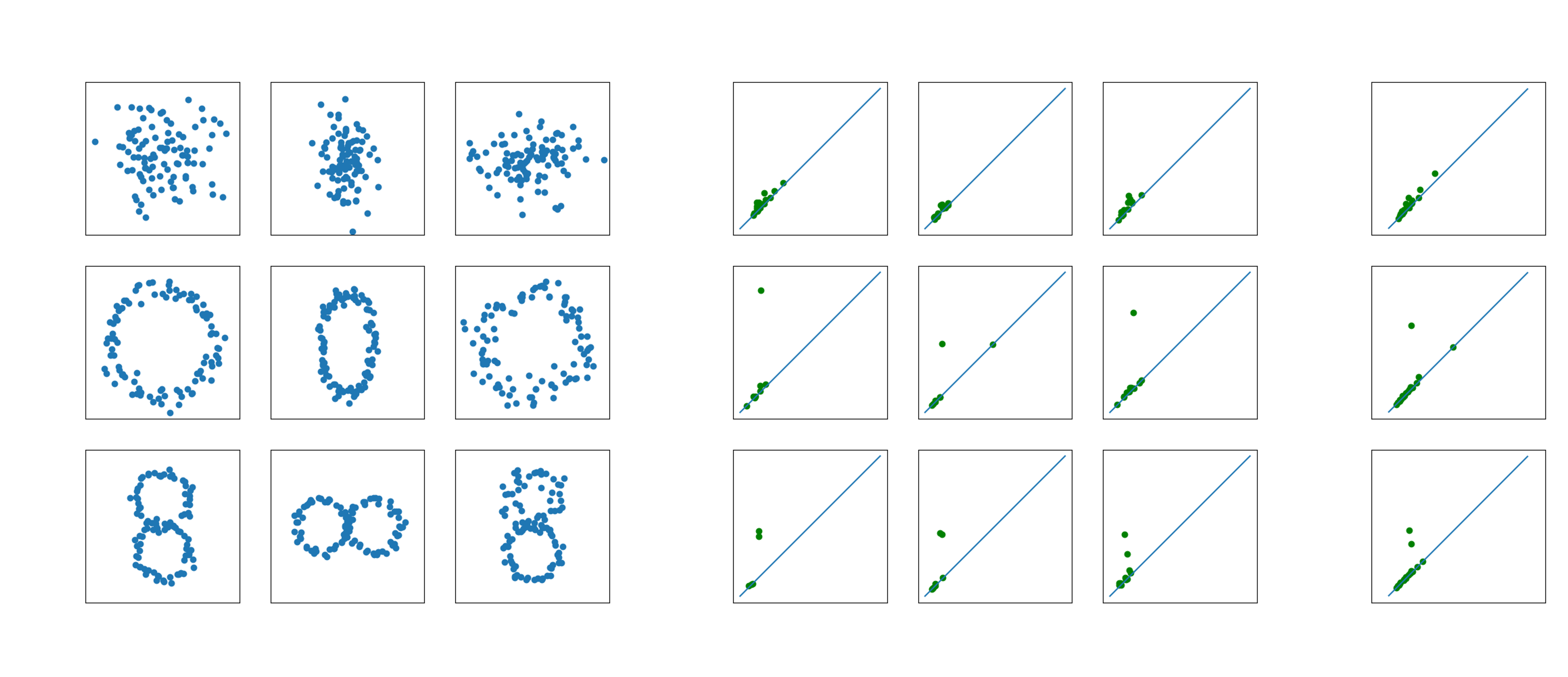}
  \caption{On the left we show nine synthetic datasets, consisting of noise, one hole, or two holes. In the middle we compute the 1-persistence diagrams, which we recall counts the number of holes. We cluster these persistence diagrams, resulting in three learnt cluster centres, shown on the right. The cluster centres have zero, one, and two significantly off-diagonal points: the clustering algorithm has learnt the topological features of the datasets.}
\end{figure*}

\section{Topological preliminaries}

Topological Data Analysis emerged from the study of algebraic topology, providing a toolkit to fully describe the topology of a dataset. We offer a quick summary below; for more comprehensive details see \cite{comptopbook}. A set of points in $\mathbb{R}^d$ are indicative of the shape of the distribution they are sampled from. By connecting points that are pairwise within $\epsilon > 0$ distance of each other, we can create an approximation of the distribution called the Vietoris-Rips complex \citep{rips}. Specifically, we add the convex hull of any collection of points that are pairwise at most $\epsilon$ apart to the $\epsilon$-Vietoris-Rips complex. However, choosing an $\epsilon$ remains problematic; too low a value and key points can remain disconnected, too high a value and the points become fully connected. To overcome this we use \textit{persistence}: we consider the approximation over all values of $\epsilon$ simultaneously, and study how the topology of that approximation evolves as $\epsilon$ grows. We call the collection of complexes for all $\epsilon$ a filtration. One example of a filtration is shown on the left of Figure 1.

For each $\epsilon$, we compute the $p$-homology group. This tells us the topology of the $\epsilon$-Vietoris-Rips complex: the 0-homology counts the number of connected components, the 1-homology counts the number of holes, the 2-homology counts the number of voids, and so on \citep{toppers}. The $p$-persistent homology group is created by summing the $p$-homology groups over all $\epsilon$. This results in a $p$-PH group that summarises information about the topology of the dataset at all granularities. If it only persists for a short amount of time, then it's more likely to be noise \citep{stab_pd}. We can stably map a $p$-PH group into a multiset in the extended plane called a persistence diagram \citep{isometry}. Each topological feature has a birth and death: a feature is born when enough points enter the filtration to create it: in Figure 1 we can see a hole is created in the third complex in the filtration. The topological feature then dies when $\epsilon$ grows large enough to destroy that feature: in the fourth complex the hole is filled in. The birth and death values (i.e., the values of $\epsilon$ when a topological feature enters the filtration or is destroyed) are the axes of the persistence diagram, so each point in the persistence diagram represents a topological feature. In Figure 1, the green point represents the hole. The larger the difference between birth and death values, the longer a topological feature \textit{persists} for, and the more likely the feature is to be a feature of the distribution that the points are sampled from. By computing the birth and death points for each topological feature in the filtration, we get a complete picture of the topology of the point cloud at all granularities \citep{Zomorodian2005}. The persistence diagram is the collection of birth/death points, along with the diagonal $\Delta = \{(a,a) : a\in\mathbb{R}\}$ with infinite multiplicity, added in order to make the space of persistence diagrams complete \citep{milyenko}.

\begin{figure*}[ht!]
\centering
  \includegraphics[width=0.9\textwidth]{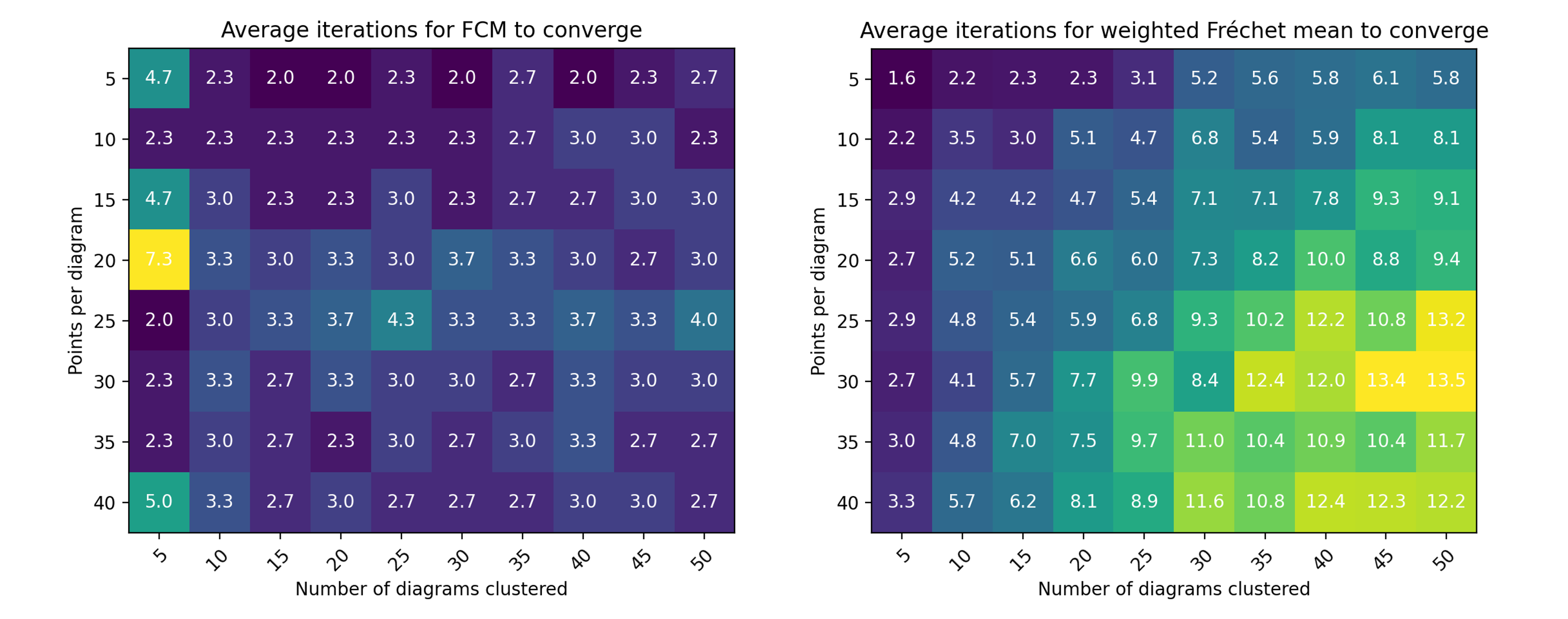}
  \caption{Heatmaps showing average number of iterations for fuzzy clustering of persistence diagrams (left) and the weighted Fréchet mean computation (right) to converge. Convergence of the FCM algorithm is determined when the cost function is stable to within $\pm0.5\%$. Convergence experiments were carried out on a total of 825 persistence diagrams, including three repeats .}
\end{figure*}

\section{Algorithmic design}

\subsection{Clustering persistence diagrams}

In order to cluster we need a distance on the space of persistence diagrams. We use the 2-Wasserstein $L_2$ metric as it is stable on persistence diagrams of finite point clouds \citep{isometry}. The Wasserstein distance is an optimal transport metric that has found applications across machine learning. In the Euclidean case, it quantifies the smallest distance between optimally matched points. Given diagrams $\mathbb{D}_1, \mathbb{D}_2$, the distance is \[W_2(\mathbb{D}_1, \mathbb{D}_2) =  \left( \inf_{\gamma: \mathbb{D}_1 \to \mathbb{D}_2}  \sum_{x \in \mathbb{D}_1} || x - \gamma(x) ||_2^2 \right)^{1/2},\]
where the infimum is taken over all bijections $\gamma: \mathbb{D}_1 \to \mathbb{D}_2$. Note that as we added the diagonal with infinite multiplicity to each diagram, these bijections exist. If an off-diagonal point is matched to the diagonal the transportation cost is simply the shortest distance to the diagonal. In fact, the closer a point is to the diagonal, the more likely it is to be noise \citep{stab_pd}, so this ensures our distance is not overly affected by noise.

We work in the space $\DL2 = \{ \D : W_2(\D, \Delta) < \infty \}$,\footnote{To ensure that our persistence diagrams are all in this space, we map points at infinity to a hyperparameter $T$ that is much larger than other death values in the diagram. Alternatively, this can be avoided entirely by computing the diagrams with extended persistence \citep{extendedpers}, which removes points at infinity.} as this leads to a geodesic space with known structure \citep{Turner2012FrchetMF}. Given a collection of persistence diagrams $\{\mathbb{D}_j\}_{j=1}^n \subset \DL2$ and a fixed number of clusters $c$, we wish to find cluster centres $\{\mathbb{M}_k\}_{k=1}^c \subset \DL2$, along with membership values $r_{jk} \in [0,1]$ that denote the extent to which $\mathbb{D}_j$ is associated with cluster $\mathbb{M}_k$. We follow probabilistic fuzzy clustering, so that $\sum_k r_{jk} = 1$ for each $j$. 

We extend the FCM algorithm originally proposed by \cite{BezdekPaper}. Our $r_{jk}$ is the same as traditional FCM clustering, adapted with the Wasserstein distance. That is, 
\begin{equation}
r_{jk} = \left( \sum_{l=1}^c \frac{ W_2(\M_k, \D_j) }{W_2(\M_l, \D_j)}  \right)^{-1}.
\end{equation}

To update $\mathbb{M}_k$, we compute the weighted Fréchet mean $\hat{\mathbb{D}}$ of the persistence diagrams $\{\mathbb{D}_j\}_{j=1}^n$ with the weights $\{r_{jk}^2\}_{j=1}^n$.

Specifically, 
\begin{equation}
\mathbb{M}_k \longleftarrow \mathrm{arg}\min_{\hat{\mathbb{D}} } \sum_{j=1}^n r_{jk}^2 W_2(\hat{\mathbb{D}}, \mathbb{D}_j)^2, \text{ for } k=1,\dots,c.
\end{equation}

As the weighted Fréchet mean extends weighted centroids to general metric spaces, this gives our fuzzy cluster centres. The computation of the weighted Fréchet mean is covered in Section 3.2. By alternatively updating (1) and (2) we get a sequence of iterates. Theorem 1, proven in Appendix A, provides the same convergence guarantees as traditional FCM clustering.

\begin{thm} 
\label{thm: local convergence}
Every convergent subsequence of the sequence of iterates obtained by alternatively updating membership values and cluster centres with (1) and (2) tends to a local minimum or saddle point of the cost function $J(R, \M) = \sum_{j=1}^n \sum_{k=1}^c r_{jk}^2 W_2(\M_k, \D_j)^2$.
\end{thm}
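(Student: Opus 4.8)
The plan is to run the classical Zangwill-style analysis of alternating minimisation used for Euclidean FCM, but carried out inside the geodesic metric space $(\DL2, W_2)$ rather than a Hilbert space. Write one full round of the algorithm as a point-to-set map $\Phi = \Phi_{\M}\circ\Phi_R$, where $\Phi_R$ sends a tuple of centres $\M=(\M_1,\dots,\M_c)$ to the membership matrix $R=(r_{jk})$ given by (1), and $\Phi_{\M}$ sends $R$ to the \emph{set} of tuples of weighted Fréchet means given by (2); the latter is genuinely set-valued since the weighted Fréchet mean of a finite collection of diagrams need not be unique. The first step is to show that $J$ is a descent function for $\Phi$: each half-step can only decrease $J$, strictly unless the current point is already a fixed point of $\Phi$. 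The membership half-step is the easy direction — for fixed $\M$ with all $W_2(\M_k,\D_j)>0$, minimising $\sum_{j,k} r_{jk}^2 W_2(\M_k,\D_j)^2$ over the product of simplices $\{\sum_k r_{jk}=1\}$ is a strictly convex finite-dimensional problem whose unique solution is exactly (1) — and the centre half-step decreases $J$ by the very definition of the Fréchet mean in (2).

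The second, and hardest, step is to prove $\Phi$ is a closed map. Closedness of $\Phi_R$ is routine: it follows from continuity of $W_2$ on $\DL2$ and the explicit formula (1), on the complement of the degenerate locus where some $W_2(\M_k,\D_j)=0$, which is handled by the usual FCM convention of freezing those $r_{jk}$. Closedness of $\Phi_{\M}$ is where the weak structure of diagram space bites, since $\DL2$ is infinite-dimensional and not locally compact and the mean is defined by an infimum over bijections. I would resolve this using three ingredients: (i) an a priori bound — for a fixed finite collection $\{\D_j\}$ and weights in $[0,1]$, every weighted Fréchet mean has off-diagonal points confined to a fixed compact region of the extended plane determined by the birth/death coordinates occurring in the $\D_j$, so all centre-iterates lie in a fixed compact subset $\mathcal{K}\subset\DL2$, which substitutes for local compactness; (ii) continuity of $W_2$ and lower semicontinuity of the Fréchet functional $\hD\mapsto\sum_j r_{jk}^2 W_2(\hD,\D_j)^2$ jointly in $(\hD,R)$ on $\mathcal{K}$; and (iii) the structure theory of \citet{Turner2012FrchetMF} for minimisers. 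Taking limits along sequences then gives closedness of the $\mathrm{arg}\min$ correspondence in the standard way. That $\Phi_{\M}$ is actually realised by a terminating computation — the extension of the algorithm of \citet{Turner2012FrchetMF} to the weights $r_{jk}^2$ — is the separate convergence result of Section 3.2, which I would invoke here.

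With a descent function, a closed map, and all iterates trapped in the compact set $\mathcal{K}$ (so limit points exist and subsequences converge within $\mathcal{K}$), Zangwill's global convergence theorem gives, following the pattern of the Euclidean analysis \citep{repairs}, that every convergent subsequence converges to a fixed point of $\Phi$. It then remains to identify fixed points with local minima or saddle points of $J$. At a fixed point, $R$ is optimal for $J(\cdot,\M)$ over the product of simplices and each $\M_k$ minimises the $k$-th Fréchet functional over $\DL2$; in particular $\M$ is a critical point, which by the Turner et al. characterisation means each $\M_k$ is a fixed point of the ``match-and-average'' operator for the weights $r_{jk}^2$. A point that is simultaneously coordinate-wise optimal in $R$ and in $\M$ but is not a joint local minimum is, exactly as in the Euclidean FCM analysis, a saddle point of $J$ (it cannot be a local maximum, since $J$ strictly decreases off fixed points), which yields the stated dichotomy.

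The step I expect to be the genuine obstacle is (ii)+(iii) above — closedness of the weighted-Fréchet-mean correspondence on $\DL2$. In $\mathbb{R}^d$ this is immediate from continuity of the centroid, but here one must first prove the a priori compact confinement $\mathcal{K}$, and then establish enough regularity of $W_2$ and of minimisers of the Fréchet functional on $\mathcal{K}$, relying on the combinatorial description of optimal matchings, to run the limiting argument. A secondary nuisance, also present classically, is the set of configurations with some $W_2(\M_k,\D_j)=0$, which must be excluded or handled by convention so that (1) and the closedness argument make sense.
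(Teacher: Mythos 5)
Your proposal follows essentially the same route as the paper's Appendix A: Zangwill's convergence theorem applied to the composite update map, with the cost $J$ as the descent function (membership half-step optimal by the Bezdek argument, centre half-step decreasing by definition of the weighted Fr\'echet mean), compact confinement of the iterates (memberships in $[0,1]^{nc}$, centres trapped in convex hulls of the matched points of the input diagrams), and closedness of the update map. If anything, you are more explicit than the paper about the two delicate points --- the set-valuedness and closedness of the weighted-Fr\'echet-mean correspondence and the degenerate case $W_2(\M_k,\D_j)=0$ --- which the paper's appendix dispatches in a single sentence.
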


Observe that we only guarantee the convergence of subsequences of iterates. This is the same as traditional FCM clustering, so we follow the same approach to a stopping condition and run our algorithm for a fixed number of iterations.

\subsection{Computing the weighted Fréchet mean}

\citet{Turner2012FrchetMF} give an algorithm for the computation of Fréchet means. In this section we extend their algorithm and proof of convergence to the weighted case. The proof is by gradient descent, which requires defining a differential structure on the space of persistence diagrams. Our extension to the proof comes down to proving that given some supporting vectors of the Fréchet function, the weighted sum of those is also a supporting vector. For more details see Appendix B.

To give some intuition, start by recalling that when processing the persistence diagrams we add copies of the diagonal to ensure that each diagram has the same cardinality; denote this cardinality as $m$. To compute the weighted Fréchet mean, we need to find $\M_k=\{y^{(i)}\}_{i=1}^m$ that minimises the Fréchet function in (2). Implicit to the Wasserstein distance is a bijection $\gamma_j: y^{(i)} \mapsto x_j^{(i)}$ for each $j$. Supposing we know these bijections, we can rearrange the Fréchet function into the form $F(\M_k) = \sum_{j=1}^n r_{jk}^2 W_2(\M_k, \mathbb{D}_j)^2=\sum_{i=1}^m \sum_{j=1}^n r_{jk}^2 ||y^{(i)} - x_j^{(i)} ||^2.$ In this form, the summand is minimised for $y^{(i)}$ by the weighted Euclidean centroid of the points $\{x_j^{(i)}\}_{j=1}^n$. Therefore to compute the weighted Fréchet mean, we need to find the correct bijections. We start by using the Hungarian algorithm to find an optimal matching between $\mathbb{M}_k$ and each $\mathbb{D}_j$. Given a $\mathbb{D}_j$, for each point $y^{(i)} \in \mathbb{M}_k$, the Hungarian algorithm will assign an optimally matched point $x^{(i)}_j \in \mathbb{D}_j$. Specifically, we find matched points
\begin{equation}
\left[ x_j^{(i)}\right]_{i=1}^m \longleftarrow \text{Hungarian}(\M_k,\D_j), \text{ for each } j=1,\dots,n.
\end{equation}
Now, for each $y^{(i)} \in \M_k$ we need to find the weighted average of the matched points $\left[ x_j^{(i)}\right]_{j=1}^n$. However, some of these points could be copies of the diagonal, so we need to consider three distinct cases: that each matched point is off-diagonal, that each one is a copy of the diagonal, or that the points are a mixture of both. We start by partitioning $1,\dots,n$ into the indices of the off-diagonal points $\mathscr{J}^{(i)}_{\text{OD}} = \left\{ j: x_j^{(i)} \neq \Delta\right\}$ and the indices of the diagonal points $\mathscr{J}^{(i)}_{\text{D}} = \left\{ j: x_j^{(i)} = \Delta\right\}$ for each $i=1,\dots,m$. Now, if $\mathscr{I}_{\text{OD}} = \emptyset$ then $y^{(i)}$ is a copy of the diagonal. If not, let $w = \left(\sum_{j\in \mathscr{J}^{(i)}_{\text{OD}}} r_{jk}^2 \right)^{-1}  \sum_{j\in \mathscr{J}^{(i)}_{\text{OD}}} r_{jk}^2 x^{(i)}_j $ be the weighted mean of the off-diagonal points. If $\mathscr{J}^{(i)}_{\text{D}} = \emptyset$, then $y^{(i)} = w$. Otherwise, let $w_\Delta$ be the point on the diagonal closest to $w$. Then our update is
\begin{equation}
y^{(i)} \longleftarrow \frac{ \sum_{j \in \mathscr{J}^{(i)}_{\text{OD}} } r_{jk}^2 x^{(i)}_j + \sum_{j \in \mathscr{J}^{(i)}_{\text{D}} } r_{jk}^2 w_\Delta }{\sum_{j=1}^n r_{jk}^2}
\end{equation}

for $i=1,\dots,m$. We alternate between (3) and (4) until the matching remains the same. Theorem 2, proving that this algorithm converges to a local minimum of the Fréchet function, is proven in Appendix B. 

\begin{thm} 
\label{thm: weighted fechet mean}
Given diagrams $\D_j$, membership values $r_{jk}$, and the Fréchet function $F(\hat{\D}) = \sum_{j=1}^n r_{jk}^2 W_2(\hat{\D}, \D_j)^2$, then $\M_k = \{ y^{(i)} \}_{i=1}^m$ is a local minimum of $F$ if and only if there is a unique optimal pairing from $\M_k$ to each of the $\D_j$ and each $y^{(i)}$ is updated via (4).
\end{thm}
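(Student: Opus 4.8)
The plan is to follow the differential-geometric argument \citet{Turner2012FrchetMF} use for the unweighted Fréchet mean, carrying the weights $r_{jk}^2$ through every step; fix $k$ throughout and write $r_j := r_{jk}$, assuming $r_j>0$ (if $r_j=0$ the diagram $\D_j$ contributes nothing to $F$ and the uniqueness requirement for that $j$ may be dropped). First I would recall the differential structure on $\DL2$: for $\M=\{y^{(i)}\}_{i=1}^m$, a perturbation is a tuple $v=(v^{(i)})_{i=1}^m$ of displacement vectors, and $g_j(\M):=W_2(\M,\D_j)^2$ is the minimum over the finitely many relevant matchings $\gamma$ of the quadratic $\mathrm{cost}_\gamma(\M)=\sum_i \| y^{(i)}-\gamma(y^{(i)})\|^2$, where a match to $\Delta$ contributes $d(y^{(i)},\Delta)^2$. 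Hence the one-sided directional derivative is $D_v g_j(\M)=\min_{\gamma\in\Gamma_j(\M)}\langle \nabla\mathrm{cost}_\gamma(\M),v\rangle$, where $\Gamma_j(\M)$ is the set of optimal matchings and the $i$-th block of $\nabla\mathrm{cost}_\gamma(\M)$ is $2(y^{(i)}-\gamma(y^{(i)}))$ for an off-diagonal match and $2(y^{(i)}-\Delta(y^{(i)}))$ for a diagonal match (using $\nabla d(\cdot,\Delta)^2(y)=2(y-\Delta(y))$). These gradients are the supporting vectors of \citet{Turner2012FrchetMF}, and since $D_v F(\M)=\sum_j r_j^2\, D_v g_j(\M)$ is built linearly from them, a weighted combination $\sum_j r_j^2 s_j$ with each $s_j$ a supporting vector of the term $g_j$ at $\M$ is again a supporting vector of $F$ at $\M$; checking this is the single extra ingredient the weighted case needs, and it comes down to verifying the diagonal-projection terms still combine correctly.

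Given this, for the ``if'' direction I would argue: assume each pairing $\M_k\to\D_j$ is unique, $\Gamma_j(\M_k)=\{\gamma_j\}$, and that each $y^{(i)}$ satisfies (4). Uniqueness plus continuity (the second-best matching is strictly worse at $\M_k$) yields a neighbourhood $U$ of $\M_k$ on which $g_j=\mathrm{cost}_{\gamma_j}$ for all $j$, so that $F|_U=\sum_{i=1}^m \phi_i(y'^{(i)})$ with $\phi_i(y)=\sum_{j\in\mathscr{J}^{(i)}_{\mathrm{OD}}} r_j^2\|y-x_j^{(i)}\|^2+\sum_{j\in\mathscr{J}^{(i)}_{\mathrm{D}}} r_j^2\, d(y,\Delta)^2$. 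Each $\phi_i$ is a strictly convex quadratic, and setting $\nabla\phi_i=0$ — using that the point produced by (4) lies on the segment from the off-diagonal weighted mean $w$ to $w_\Delta=\Delta(w)$, so that its diagonal projection is again $w_\Delta$ — shows its unique minimiser is exactly the point prescribed by (4). Since $\M_k$ attains all these minimisers, $F(\M')\ge F(\M_k)$ on $U$. The one delicate sub-case is a diagonal-copy coordinate ($\mathscr{J}^{(i)}_{\mathrm{OD}}=\emptyset$): such a $y^{(i)}$ is matched to $\Delta$ in every $\D_j$, contributes $(\sum_j r_j^2)\, d(y^{(i)},\Delta)^2$, and any small perturbation off $\Delta$ only increases the total cost (re-matching it to an off-diagonal point is strictly more expensive near $\Delta$), so it too cannot be improved.

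For the ``only if'' direction, suppose $\M_k$ is a local minimum; then $D_v F(\M_k)\ge 0$ for every $v$, hence $D_v F(\M_k)+D_{-v}F(\M_k)\ge 0$. But $D_v F(\M_k)+D_{-v}F(\M_k)=\sum_j r_j^2\bigl(\min_{\gamma\in\Gamma_j}\langle\nabla\mathrm{cost}_\gamma,v\rangle-\max_{\gamma\in\Gamma_j}\langle\nabla\mathrm{cost}_\gamma,v\rangle\bigr)\le 0$, with strict inequality once some $\Gamma_{j_0}$ contains two matchings $\gamma\ne\gamma'$ and $v$ is chosen with $\langle\nabla\mathrm{cost}_\gamma(\M_k)-\nabla\mathrm{cost}_{\gamma'}(\M_k),v\rangle\ne 0$; taking $v=\nabla\mathrm{cost}_\gamma(\M_k)-\nabla\mathrm{cost}_{\gamma'}(\M_k)$ works whenever these supporting vectors differ, which they do unless $\gamma,\gamma'$ differ only by permuting coincident matched points (an ambiguity folded into the meaning of ``unique''). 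This contradiction forces uniqueness of every pairing. Uniqueness then makes $F$ differentiable at $\M_k$ with $F=\sum_i\phi_i$ near $\M_k$ as above, so $\nabla F(\M_k)=0$ reads $\nabla\phi_i(y^{(i)})=0$ for each $i$, i.e. $y^{(i)}$ is precisely the update (4). (Separately, that the alternation of (3) and (4) actually reaches such a point follows, as in \citet{Turner2012FrchetMF}, from the fact that each step weakly decreases $F$ and there are finitely many matchings.)

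\textbf{Where the work is.} I expect the routine parts to be the two convex-quadratic minimisations and the directional-derivative bookkeeping; the real obstacle is the interaction with the diagonal $\Delta$ — both showing that weighted sums of supporting vectors remain supporting vectors when some matches land on $\Delta$ (so that $d(\cdot,\Delta)^2$ enters the derivative formula and the first-order condition (4) correctly), and ruling out, in the ``if'' direction, descent directions that drag a diagonal-copy coordinate off $\Delta$. This is exactly the place where the infinite multiplicity of $\Delta$, which is what makes $\DL2$ complete and the ambient space non-Euclidean, prevents the problem from decoupling into plain Euclidean centroid computations, and so where the extension of \citet{Turner2012FrchetMF} genuinely has to do something.
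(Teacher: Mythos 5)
Your proposal follows essentially the same route as the paper's proof: it adapts the supporting-vector/differential structure of \citet{Turner2012FrchetMF}, isolates the weighted combination of supporting vectors as the one genuinely new ingredient, and handles the converse by locally freezing the unique matching (the paper's Lemma~8) so that $F$ decomposes into per-coordinate weighted quadratics whose minimisers, including the diagonal-projection case, are exactly the update (4). The only real difference is your first-order variational argument for uniqueness of the optimal pairing, where the paper instead notes that two optimal pairings would give two vanishing weighted sums of supporting vectors; this is a minor variation (and arguably a more careful one) on the same idea.
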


\section{Experiments}

\subsection{Synthetic data}
\textbf{Exemplar clustering.} We start by demonstrating our algorithm on a simple synthetic dataset designed to highlight its ability to cluster based on the topology of the underlying datasets. We produce three datasets of noise, three datasets of a ring, and three datasets of figure-of-eights, all shown on the left of Figure 2. In the middle of Figure 2 we show the corresponding 1-PH persistence diagrams. Note that the persistence diagrams have either zero, one, or two significant off-diagonal points, corresponding to zero, one, or two holes in the datasets. We then use our algorithm to cluster the nine persistence diagrams into three clusters. Having only been given the list of diagrams, the number of clusters, and the maximum number of iterations, our algorithm successfully clusters the diagrams based on their topology. The right of Figure 2 shows that the cluster centres have zero, one, or two off-diagonal points: our algorithm has found cluster centres that reflect the topological features of the datasets.

\begin{figure}
\centering
  \includegraphics[width=0.5\textwidth]{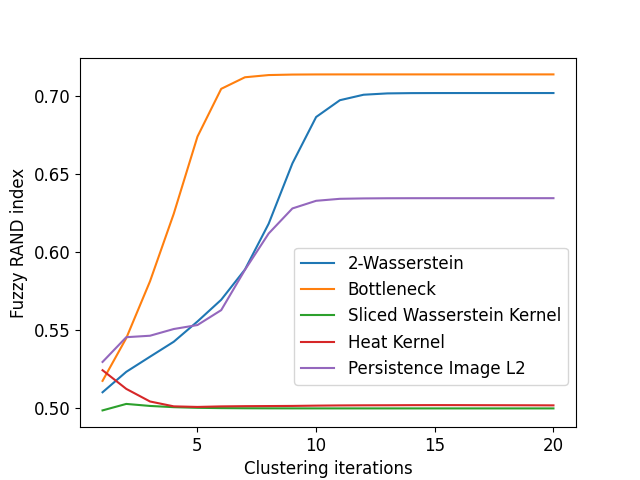}
  \caption{For computational speedups practitioners may wish to use different distances in the clustering algorithm. We use the fuzzy RAND index \citep{frand} to evaluate cluster quality when using some common distances. The more expensive optimal matching-based distances perform best, whereas approximations and embedding-based distances are faster but score lower.}
\end{figure}

\begin{figure*}[ht!]
\centering
  \includegraphics[width=0.99\textwidth]{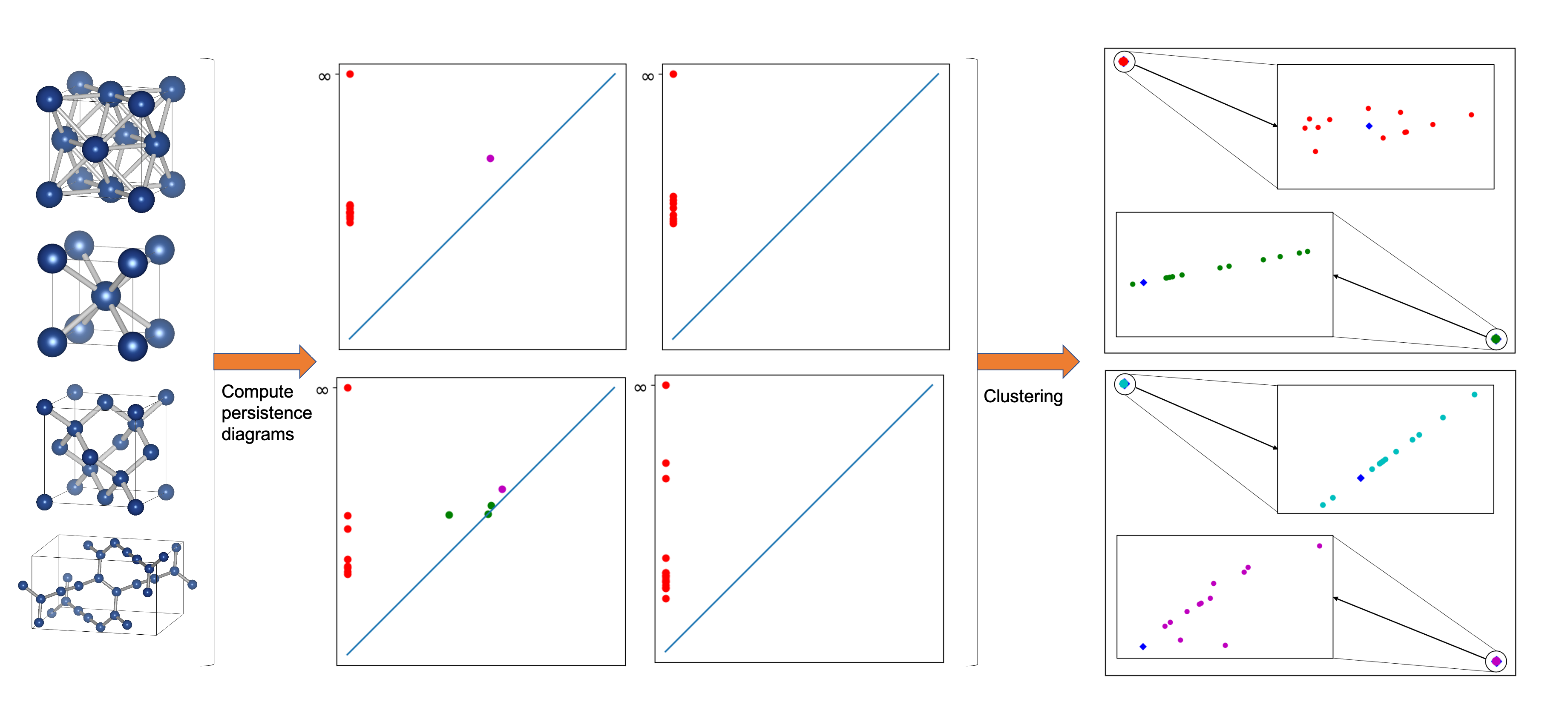}
  \caption{We fuzzy cluster the persistence diagrams of lattice structures, enabling unsupervised classification of transformed materials science datasets for the first time. From left to right we display four examples of the lattice structures, followed by their persistence diagrams (red is 0-PH, green is 1-PH, and magenta is 2-PH). In the final panel we show persistence diagram space embedded into the plane with multidimensional scaling. Each circular point is a persistence diagram, and the diamond points are the learnt cluster centres.}
\end{figure*}

\textbf{Empirical behaviour.} Figure 3 shows the results of experiments run to determine the empirical performance of our algorithm. We give theoretical guarantees that every convergent subsequence will tend to a local minimum, but in practice it remains important that our algorithm will converge within a reasonable timeframe. To this end we ran experiments on a total of 825 randomly generated persistence diagrams, recording the number of iterations and cost functions for both the FCM clustering and the weighted Fréchet mean (WFM) computation. We considered the FCM to have converged when the cost function remained within $\pm 0.5\%$. As explained in Section 3.2, the WFM converges when the matching stays the same. Our experiments showed that the FCM clustering consistently converges within $5$ iterations, regardless of the number of diagrams and points per diagram (note that the time per iteration increases as the number of points/diagrams increases, even if the number of iterations remains stable). We had no experiments in which the algorithm did not converge. The WFM computation requires more iterations as both number of diagrams and number of points per diagram increases, but we once again experienced no failures to converge in each of our experiments. In general, running the algorithm offered no difficulties on an standard laptop,\footnote{A 2018 MacBook Pro with a 1.4 GHz i5 and 8GB RAM.} and we believe the algorithm is ready for use by practitioners in the TDA community.

Because we are reducing the cardinality and dimensionality of datasets by mapping into persistence diagrams, we also demonstrate a speed-up of at least an order of magnitude over Wasserstein barycentre clustering methods. Details of these timing experiments are in Appendix C.1. However, the use of the Wasserstein distance in the clustering still means that some large-scale problems are computationally intractable. To explore solutions to this, we investigated the use of different distances in Equation (1). Specifically, we evaluated the quality of learnt clusters using the fuzzy RAND index \citep{frand} when clustering with the 2-Wasserstein distance, bottleneck distance, sliced Wasserstein kernel \citep{slicekern}, heat kernel, and L2 distance between persistence images \citep{persimages}. We find that the more expensive optimal matching-based distances perform best, whereas approximations and embedding-based distances are faster but score lower. These results are shown in Figure 4.

\subsection{Lattice structures}

Recall that a key property for machine learning in materials science has been identified as ``invariance to the basis symmetries of physics [...] rotation, reflection, translation'' \citep{Schmidt2019}. Removing the need for a standardised coordinate system allows machine learning methods to be applied to a much broader range of existing datasets. Persistence diagrams, which capture affine transformation-invariant properties of datasets, are ideally suited for application in this domain. Our algorithm enables successful unsupervised learning on these datasets for the first time. Additionally, the fuzzy membership values allow top-$k$ ranking of candidates suggested by our algorithm. This is particularly important in materials science, where further investigation of materials can be extremely costly.


We analyse the lattice structures of body-centered cubic structures and face-centered cubic structures from the Materials Project \citep{matproject} and two carbon allotropes from the Samara database \citep{sacada}. We simulate distinct collections of lattices by transforming the atomic coordinates, with no information about bonds given to the algorithms. The properties of persistence diagrams mean that we can successfully cluster the atomic coordinates derived from the same base unit-cell regardless of the transformations applied to the coordinate system, fulfilling the key property identified above. In comparison, we run Wasserstein barycentre clustering on the same datasets using several state-of-the-art algorithms for barycentre computation and approximation. Each can only successfully cluster the cubic structures after reflection, and none of them successfully cluster the carbon allotropes after any transformations. Further details on these results are available in Appendix C.2.


\subsection{Decision boundaries}
Learnt models have been shown to perform better on datasets which have a similar topological complexity to the model's decision boundary \citep{tda_db}. In fact, there is an increasing amount of work studying the link between topology and neural network performance \citep{neurpers, nncap}. We utilise our algorithm to cluster the topology of the decision boundaries of pre-trained models and tasks (labelled datasets). Given a task we find the nearest cluster centre, then select the models nearest to that centre. Even though the only information utilised for the model selection is the topology of the decision boundaries, we find that it consistently selects the top performing model as the first choice, and additional choices perform above average, despite not being trained on the task. This indicates that the topology of the decision boundary is indicative of generalisation ability to unseen tasks. Furthermore, our algorithm is able to exploit this information to learn cluster centres that consistently select the best performing models on tasks.

Specifically, given a dataset with $n$ classes, we fix one class to define $n-1$ \textit{tasks}: binary classification of the fixed class vs each of the remaining classes. On each of these tasks, we train a \textit{model}. We compute the decision boundary of the model $f$, defined as $(x_1, \dots, x_m, f(x))$ where $f(x)$ is the model's prediction for $x=(x_i)_i$, and the decision boundary of the tasks, defined via the labelled dataset as $(x_1, \dots, x_m, y)$ where $y$ is the true label. We compute the $1$-persistence diagrams of the tasks' and models' decision boundaries and cluster them to obtain membership values and cluster centres. To view task and model proximity through our clustering, we find the cluster centre with the highest membership value for each task, and consider the models closest to that cluster centre. Note that model selection is naturally a fuzzy task: one model can (and does) perform well on multiple tasks. Therefore this is a task best suited to fuzzy clustering. 
We further discuss why hard clustering does not work here in Appendix C.3.

To assess the ability of our model/task clustering, we performed the above experiment on three different datasets: MNIST \citep{mnist}, FashionMNIST \citep{fashion}, and Kuzushiji-MNIST \citep{kmnist}. We repeat each experiment three times using sequential seeds, resulting in a total of 81 trained models. Our goal is to evaluate whether or not the clustering is capturing information about model performance on tasks, so as a baseline we use the average performance of all models on a fixed task, averaged over all tasks. We start by verifying what happens if we use the model closest to the cluster centre associated with the task (i.e., top-1). We see a significant increase in performance, indicating that the topological fuzzy clustering has selected the model trained on the task, despite only having information about the topology of the decision boundary. We also compute the top-3 and top-2 performance change over average. We still see a statistically significant increase in performance over average performance, indicating that the fuzzy clusters are capturing information about model generalisation to unseen tasks. These results are shown in Figure 6.

\begin{figure}[ht!]
\centering
  \includegraphics[width=0.5\textwidth]{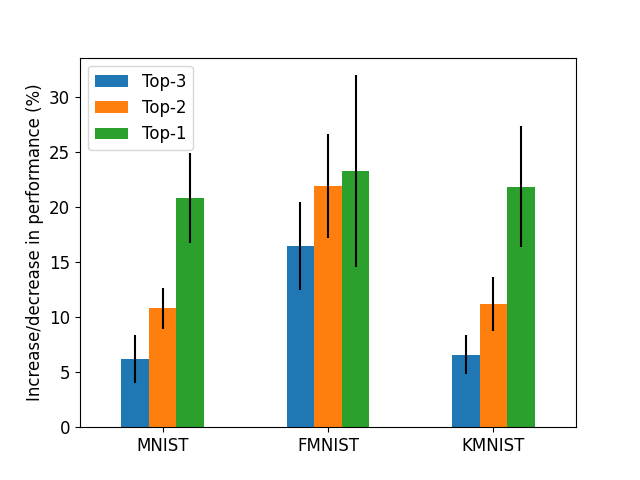}
  \caption{Performance increase/decrease over average task performance when using the fuzzy clustered persistence diagrams of decision boundaries for model selection. Given a task, we find its nearest cluster centre, and use fuzzy membership values to select the nearest models. The increase in performance demonstrates that our fuzzy clustering automatically clusters models near tasks they perform well on, using just the topology of their decision boundaries.}
\end{figure}

\section{Conclusion}

We have developed Fuzzy c-Means clustering on the space of persistence diagrams, adding an important class of unsupervised learning to Topological Data Analysis' toolkit. We give theoretical and empirical convergence results, and study applications to materials science and model selection. We find our results on decision boundaries particularly exciting: that we are able to successfully fuzzy cluster pre-trained models with tasks they generalise well to based only on the topology of their decision boundaries is somewhat surprising. This link between the topology of decision boundaries and their performance on unseen tasks brings further insights to the generalisation power of machine learning.



\newpage 

\bibliography{example_paper}
\bibliographystyle{icml2020}


\newpage

\appendix
\section{Convergence of the FCM clustering algorithm}
We first need to consider the update steps (1) and (2) as a single update procedure. Let $F:\M \mapsto R$ be defined by (1) and $G:R\mapsto\M$ be defined by (2), and for $R = \{r_{jk}\}$ and $\M = \{\M_k\}$ consider the sequence \[\left\{ T^{(l)}(R, \M) : l=0,1,\dots \right\}\] where $T(R,M) = (F\circ G(R), G(R))$. We wish to show convergence of the iterates of $T$ to a local minimum or saddle point of the cost function \[J(R, \M) = \sum_{j=1}^n \sum_{k=1}^c r_{jk}^2 W_2(\M_k, \D_j)^2.\] The two stage update process of $T$ is too complicated to use standard fixed point theorems, so following \citet{BezdekPaper} we shall use the following result, which is proven by \citet{zangwill1969nonlinear}.

\begin{thm}[Zangwill's Convergence Theorem] Let $A: X \to 2^X$ be a point-to-set algorithm acting on $X$. Given $x_0\in X$, generate a sequence $\{x_k\}_{k=1}^\infty$ such that $x_{k+1} \in A(x_k)$ for every $k$. Let $\Gamma \subset X$ be a solution set, and suppose that the following hold.
\begin{enumerate}[(i)]
\item The sequence $\{x_k\} \subset S \subset X$ for a compact set $S$.
\item There exists a continuous function $Z$ on $X$ such that if $x\not\in \Gamma$ then $Z(y)<Z(x)$ for all $y\in A(x)$, and if $x\in \Gamma$ then $Z(y)\leq Z(x)$ for all $y\in A(x)$. The function $Z$ is called a descent function.
\item The algorithm $A$ is closed on $X\setminus \Gamma$.
\end{enumerate}
Then every convergent subsequence of $\{x_k\}$ tends to a point in the solution set $\Gamma$.
\end{thm}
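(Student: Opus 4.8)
The plan is to prove this by contradiction on the limit of an arbitrary convergent subsequence. Fix a convergent subsequence $\{x_{k_i}\}$ of the iterates with limit $x^\ast$; the goal is to show $x^\ast \in \Gamma$. The backbone of the argument is the observation that the descent-function values $\{Z(x_k)\}$ converge along the \emph{whole} sequence, which then pins down the value of $Z$ at any subsequential limit and forces a contradiction if that limit were to lie outside $\Gamma$.

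First I would establish that $\{Z(x_k)\}$ is a convergent real sequence. By condition (ii), whether or not $x_k \in \Gamma$, every $y \in A(x_k)$ satisfies $Z(y) \le Z(x_k)$; since $x_{k+1}\in A(x_k)$, the sequence $\{Z(x_k)\}$ is monotonically non-increasing. By condition (i) all iterates lie in the compact set $S$, and $Z$ is continuous, so $Z$ is bounded below on $S$ and attains its infimum; hence $\{Z(x_k)\}$ is bounded below and therefore converges to some $Z^\ast$. Because $Z$ is continuous and $x_{k_i}\to x^\ast$, continuity along the subsequence gives $Z(x_{k_i})\to Z(x^\ast)$, while $\{Z(x_{k_i})\}$ is a subsequence of $\{Z(x_k)\}\to Z^\ast$; thus $Z(x^\ast)=Z^\ast$.

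Next I would run the contradiction. Suppose $x^\ast\notin\Gamma$. The successor iterates $\{x_{k_i+1}\}$ also lie in the compact set $S$, so by passing to a further subsequence (relabelling so that both $x_{k_i}\to x^\ast$ and $x_{k_i+1}\to y^\ast$) I obtain a limit $y^\ast\in S$, with $x_{k_i+1}\in A(x_{k_i})$ throughout. Since $x^\ast\notin\Gamma$, condition (iii) says $A$ is closed at $x^\ast$, so the closedness criterion applied to $x_{k_i}\to x^\ast$, $x_{k_i+1}\to y^\ast$, and $x_{k_i+1}\in A(x_{k_i})$ yields $y^\ast\in A(x^\ast)$. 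Now the strict-descent half of condition (ii), valid precisely because $x^\ast\notin\Gamma$, gives $Z(y^\ast)<Z(x^\ast)=Z^\ast$. But $\{Z(x_{k_i+1})\}$ is again a subsequence of $\{Z(x_k)\}\to Z^\ast$ and converges by continuity to $Z(y^\ast)$, forcing $Z(y^\ast)=Z^\ast$, a contradiction. Hence $x^\ast\in\Gamma$.

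The main obstacle is the double subsequence extraction combined with the closedness hypothesis: to invoke (iii) I must produce a convergent subsequence of the \emph{successor} iterates, which is exactly what compactness in (i) supplies, and I must take care that passing to this further subsequence does not disturb the convergence $x_{k_i}\to x^\ast$ already in hand. The delicate point is that closedness of a point-to-set map is the precise hypothesis needed to transfer the membership relation $x_{k_i+1}\in A(x_{k_i})$ to the limit relation $y^\ast\in A(x^\ast)$; without it, the contradiction between the strict decrease of $Z$ at $x^\ast$ and the monotone convergence of $Z$ along the whole sequence could not be closed.
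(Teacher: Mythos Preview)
Your argument is correct and is essentially the standard proof of Zangwill's Convergence Theorem: monotone convergence of the descent-function values along the full sequence, extraction of a convergent subsequence of successor iterates via compactness, closedness to pass the inclusion to the limit, and then the strict-descent clause yields the contradiction. Note, however, that the paper does not supply its own proof of this result; it simply quotes the theorem and defers to \citet{zangwill1969nonlinear} for the proof, using it as a black box to establish Theorem~\ref{thm: local convergence}. So there is no ``paper's proof'' to compare against here---you have filled in what the paper leaves as a citation, and done so correctly.
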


Our algorithm is the update function $T$. We define our solution set as 
\begin{align*}
\Gamma = \Big\{ (R, \M) : J(R,\M) < J(\hat{R}, \hat{\M}) \\ \forall \ (\hat{R}, \hat{\M}) \in B( (R, \M), r)  \Big\}
\end{align*}
for some $r>0$, where the ball surrounding $R$ is the Euclidean ball in $\mathbb{R}^{nc}$ and the ball surrounding $\M$ is $\cup_{k=1}^c B_{W_2}(\M_k, r)$. This set contains the local minima and saddle points of the cost function \citep{repairs}. We wish to show that our cost function $J(R, \M)$ is the descent function $Z$. We proceed by verifying each of the requirements for Zangwill's Convergence Theorem.

\begin{lemma} Every iterate $T^{(l)}(R, \M) \in [0,1]^{nc} \times \mathrm{conv}(\D)^c$, where \[\mathrm{conv}(\D) = \bigcup_{k=1}^c \bigcup_{\gamma_j} \bigcup_{i=1}^m \mathrm{conv}\{\gamma_j(y^{(i)}): j=1,\dots,n\},\] with $\gamma_j$ a bijection $\M_k \to \D_j$ and $\mathrm{conv}\{\gamma_j(y^{(i)}): j= 1,\dots,n\}$ the ordinary convex hull in the plane. Furthermore, $[0,1]^{nc} \times \mathrm{conv}(\D)^c$ is compact.
\end{lemma}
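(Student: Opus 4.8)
The plan is to verify the two assertions of the lemma in turn: first, that every iterate of $T$ lands in the set $[0,1]^{nc} \times \mathrm{conv}(\D)^c$, and second, that this set is compact. For the membership-value component, note that (1) defines each $r_{jk}$ as a ratio of nonnegative distances normalised so that $\sum_l r_{jk}$-type sums equal $1$; hence $r_{jk} \in [0,1]$ for every $j,k$ regardless of the input $\M$, so after one application of $F$ the $R$-component lies in $[0,1]^{nc}$ and stays there. For the cluster-centre component, I would argue that the update (2)--(4) of $\M_k$ produces, coordinate by coordinate, a point $y^{(i)}$ that is a convex combination of the matched points $\{\gamma_j(y^{(i)})\}_{j=1}^n$ (where some matched points may be the nearest diagonal point $w_\Delta$, which itself is a convex-type combination living in the affine hull of the off-diagonal matched points and the diagonal). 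The weights $r_{jk}^2 / \sum_j r_{jk}^2$ are nonnegative and sum to $1$, so each updated $y^{(i)}$ lies in $\mathrm{conv}\{\gamma_j(y^{(i)}) : j = 1,\dots,n\}$, and therefore $\M_k \in \mathrm{conv}(\D)$ for each $k$. One must be slightly careful with the diagonal case: when a matched point is the diagonal, $w_\Delta$ is the orthogonal projection of the off-diagonal weighted mean $w$ onto $\Delta$; I would observe that since all the $\gamma_j(y^{(i)})$ that are genuine points of the $\D_j$ are themselves in $\mathrm{conv}(\D)$ and the diagonal points of the $\D_j$ are in $\mathrm{conv}(\D)$ by construction (the union over bijections in the definition of $\mathrm{conv}(\D)$ absorbs these), the projected point $w_\Delta$ remains inside the relevant convex hull, so (4) does not escape the set. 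A clean way to handle this uniformly is to note $\mathrm{conv}(\D)$ is defined as a union of the convex hulls over \emph{all} bijections $\gamma_j$, so any matching the Hungarian algorithm could return is covered.

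For compactness, $[0,1]^{nc}$ is a closed bounded subset of $\mathbb{R}^{nc}$, hence compact. For $\mathrm{conv}(\D)$: each $\D_j \in \DL2$, so each $\D_j$ has finitely many off-diagonal points (it is a finite point cloud's diagram, padded to cardinality $m$ with diagonal copies), and all its points lie in a bounded region of the plane (bounded in particular because births and deaths are bounded, with points at infinity mapped to the hyperparameter $T$). Each set $\mathrm{conv}\{\gamma_j(y^{(i)}) : j=1,\dots,n\}$ is then the convex hull of finitely many points in a fixed bounded planar region, hence compact; the indices $i$ range over a finite set $\{1,\dots,m\}$, and crucially the bijections $\gamma_j : \M_k \to \D_j$ range over a \emph{finite} set (there are finitely many bijections between two sets of cardinality $m$). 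So $\mathrm{conv}(\D)$ is a finite union of compact planar sets and is therefore compact; $\mathrm{conv}(\D)^c$ is a finite product of compact sets, hence compact, and the product $[0,1]^{nc} \times \mathrm{conv}(\D)^c$ is compact.

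The main obstacle I anticipate is the first part, and specifically making precise the claim that the updated cluster centre never leaves $\mathrm{conv}(\D)$ once the diagonal-matching cases are taken into account: the update (4) mixes genuine matched points $x_j^{(i)}$ with the projected diagonal point $w_\Delta$, and one must check that $w_\Delta$, although not literally one of the $\gamma_j(y^{(i)})$, still lies in the convex hull being considered — or else argue that the definition of $\mathrm{conv}(\D)$ (with its union over all bijections, which can match $y^{(i)}$ to any diagonal copy, and hence to points arbitrarily close to $w$ along the diagonal) is broad enough to contain it. A secondary subtlety is confirming that $\M_k$ genuinely has cardinality $m$ throughout so that the bijections $\gamma_j$ and the coordinatewise indexing $i = 1,\dots,m$ are well-defined at every iterate; this follows because the Fréchet-mean update in Section 3.2 is designed to preserve cardinality $m$ by padding with diagonal copies. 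Everything else is bookkeeping: nonnegativity of weights, normalisation to sum $1$, finiteness of the index sets, and boundedness of the point clouds.
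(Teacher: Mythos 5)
Your proposal follows essentially the same route as the paper: $R\in[0,1]^{nc}$ by construction of (1); each updated $y^{(i)}$ is a convex combination of the matched points with weights $t_j^{(i)}=r_{jk}^2\big(\sum_j r_{jk}^2\big)^{-1}$, which are nonnegative and sum to one; and compactness follows because $[0,1]^{nc}$ is compact, each planar convex hull of finitely many points is compact, and the unions over $k$, $i$, and the (finitely many) bijections, together with the finite product, preserve compactness. The one place you should tighten is the diagonal case, which you correctly flag as the main obstacle: your first justification --- that $w_\Delta$ ``remains inside the relevant convex hull'' of the genuine matched points --- is false in general, since $w_\Delta$ is the orthogonal projection of $w$ onto $\Delta$ and can lie well outside $\mathrm{conv}\{x_j^{(i)}: j\in\mathscr{J}^{(i)}_{\mathrm{OD}}\}$ whenever those points sit away from the diagonal. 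The paper's resolution is simply to adopt the convention that, in forming $\mathrm{conv}\{\gamma_j(y^{(i)}): j=1,\dots,n\}$, one sets $\gamma_j(y^{(i)}) := w_\Delta$ whenever the match is to a diagonal copy, so that $w_\Delta$ is one of the generators of the hull by definition and (4) is literally a convex combination of the generators; this is close to your second, ``union over all bijections'' suggestion, but note that the union over bijections alone does not supply the specific point $w_\Delta$ --- a diagonal copy is not a single point of the plane --- so the convention must be stated explicitly. With that convention in place (and the observation that the set of possible $w_\Delta$'s is the projection of a compact set onto $\Delta$, so boundedness is unaffected), the rest of your argument goes through as written.
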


\begin{proof}By construction, every $r_{jk} \in [0,1]$. Since $j=1,\dots,n$ and $k=1,\dots,c$, we can view $R$ as a point in $[0,1]^{nc}$, and so every iterate of $R$ is in $[0,1]^{nc}$. We shall show that for a fixed $k$ and a fixed bijection $\gamma_j:\M_k \to \D_j$, each updated $y^{(i)}$ is contained in a convex combination of $\{\gamma_j(y^{(i)}):j=1,\dots,n\}$. Where $\gamma_j(y^{(i)}) = \Delta$, let $\gamma_j(y^{(i)}) = w_\Delta$ as defined in (4), as this is the update point we use for the diagonal. Since there are a finite number of off-diagonal points, each updated $\M_k$ is therefore contained in the union over all bijections and all points $y^{(i)}$ of the convex combination of $\{\gamma_j(y^{(i)}):j=1,\dots,n\}$. By also taking the union over each $k$, we show that every iterate of $\M$ must be contained in the finite triple-union of the convex combination of each possible bijection. To show that each updated $y^{(i)}$ is contained in a convex combination of $\{\gamma_j(y^{(i)}):j=1,\dots,n\}$, recall that $y^{(i)} = \left(\sum_{j=1}^n r_{jk}^2\right)^{-1} \sum_{j=1}^n r_{jk}^2 \gamma_j(y^{(i)})$. Letting $t_j^{(i)} =  r_{jk}^2 \left(\sum_{j=1}^n r_{jk}^2\right)^{-1}$, clearly each $t_j^{(i)} >0$ and $\sum_{j=1}^n t_j^{(i)} = 1$. Since $y^{(i)} = \sum_{j=1}^n t_j^{(i)} \gamma_j(y^{(i)})$, each $y^{(i)}$ is contained in the convex combination. Therefore $T^{(l)}(R, \M) \in [0,1]^{nc} \times \mathrm{conv}(\D)^c$ for each $l=0,1,\dots$.

Now, $[0,1]$ is closed and bounded, so is compact. The convex hull of points in the plane is closed and bounded, so $\mathrm{conv}\{\gamma_j(y^{(i)}):j=1,\dots,n\}$ is compact. Since finite unions and finite direct products of compact sets are compact, $[0,1]^{nc} \times \mathrm{conv}(\D)^c$ is also compact.
\end{proof}

\begin{lemma} The cost function $J(R, \M)$ is a descent function, as defined in Theorem 3(ii).
\end{lemma}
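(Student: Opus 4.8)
The plan is to verify the two requirements of Theorem~3(ii) separately: that $J$ is continuous, and that $J$ is non-increasing along $T$ with strict decrease off the solution set $\Gamma$. For continuity, by the preceding lemma it suffices to consider $J$ on the compact set $[0,1]^{nc}\times\mathrm{conv}(\D)^c$. Since $W_2$ is a metric on $\DL2$, the reverse triangle inequality makes each map $\hat{\D}\mapsto W_2(\hat{\D},\D_j)$ $1$-Lipschitz, hence $\M_k\mapsto W_2(\M_k,\D_j)^2$ is continuous; combined with continuity of $r_{jk}\mapsto r_{jk}^2$ and the stability of continuity under finite sums and products, $J(R,\M)=\sum_{j,k} r_{jk}^2\, W_2(\M_k,\D_j)^2$ is continuous on its domain.

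For the descent property I would factor the update as $T(R,\M)=(F(\M'),\M')$ with $\M'=G(R)$ and bound each stage. At the centre stage, (2) defines each coordinate of $\M'$ as a minimizer of $\hat{\D}\mapsto\sum_j r_{jk}^2\,W_2(\hat{\D},\D_j)^2$, so $J(R,\M')\le J(R,\M)$. At the membership stage, the objective $R''\mapsto J(R'',\M')$ depends on $\M'$ only through the nonnegative reals $W_2(\M'_k,\D_j)^2$, so the classical FCM Lagrange-multiplier computation (here with fuzzifier $2$, matching the $r_{jk}^2$ in $J$) applies verbatim and, using strict convexity of $J(\cdot,\M')$ in each block $\{r''_{jk}\}_k$ on the simplex, shows that (1) is the \emph{unique} minimizer of $R''\mapsto J(R'',\M')$ over $\{R'':\sum_k r''_{jk}=1,\ r''_{jk}\ge 0\}$; in the degenerate slice where some $W_2(\M'_k,\D_j)=0$ the usual convention ($r_{jk}=1$, $r_{jl}=0$ for $l\ne k$) is the unique minimizer of that $j$-term, which equals $0$. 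Hence $J(F(\M'),\M')\le J(R,\M')$, and chaining the two bounds gives $J(T(R,\M))\le J(R,\M)$ for every $(R,\M)$, which is the weak half of Theorem~3(ii).

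It remains to upgrade this to a strict decrease off $\Gamma$, equivalently to show $J(T(R,\M))=J(R,\M)\Rightarrow(R,\M)\in\Gamma$. Equality propagates through the chain $J(R,\M)\ge J(R,\M')\ge J(F(\M'),\M')=J(T(R,\M))$, so $J(R,\M')=J(R,\M)$ and $J(F(\M'),\M')=J(R,\M')$. The weighted Fréchet mean $\M'=G(R)$ is produced by initializing at $\M$ and iterating (3)--(4) until the matching stabilizes, and each such iteration is non-increasing for the Fréchet function (proof of Theorem~2); so $J(R,\M')=J(R,\M)$ forces $\M$ to already satisfy the stationarity characterization of Theorem~2 — a unique optimal pairing to each $\D_j$, with each $y^{(i)}$ fixed by (4) — whence $G(R)=\M$, i.e.\ $\M'=\M$. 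Then $J(F(\M),\M)=J(R,\M)$ together with uniqueness of the membership minimizer gives $R=F(\M)$. Thus $(R,\M)$ is a point at which $R$ globally minimizes $J(\cdot,\M)$ over the simplex and each $\M_k$ locally minimizes its Fréchet function; such a block-coordinate-wise minimizer is a local minimum or saddle point of $J$, hence an element of $\Gamma$. Contrapositively, $(R,\M)\notin\Gamma$ forces $J(T(R,\M))<J(R,\M)$, completing Theorem~3(ii).

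I expect the strictness argument at the centre stage to be the main obstacle. In the Euclidean FCM proof one uses that the centroid is the \emph{unique} minimizer of the relevant quadratic, so "the centre update did not strictly decrease $J$" immediately yields "the centres were already optimal"; here the weighted Fréchet mean of persistence diagrams need not be unique, so that shortcut is unavailable. The way around it, as sketched above, is to argue not through uniqueness of minimizers but through the fixed-point characterization of Theorem~2: equality of Fréchet values forces the current centres to already be fixed by one round of Hungarian matching followed by the barycentric update (4), which is exactly the stationarity condition characterizing a local minimum of the Fréchet function, and that local statement — rather than any global-uniqueness claim — is what feeds into the block-minimality conclusion placing $(R,\M)$ in $\Gamma$.
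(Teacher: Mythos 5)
Your proof is correct and follows the same skeleton as the paper's: continuity of $J$ plus a two-stage bound through $T(R,\M)=(F\circ G(R),G(R))$, with the centre update handled by the minimizing property of the weighted Fr\'echet mean and the membership update by the classical FCM Lagrange-multiplier argument. Where you genuinely diverge is in how strict decrease off $\Gamma$ is obtained. The paper simply asserts that \emph{both} stagewise inequalities are strict for any $(R,\M)\notin\Gamma$, citing Bezdek's Proposition~1 and ``the definition of the Fr\'echet mean''; as stated that is stronger than what the descent condition requires and is delicate precisely for the reason you identify, namely that the weighted Fr\'echet mean in $\mathscr{D}_{L^2}$ need not be a unique minimizer. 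Your route --- establish the weak inequality everywhere, then analyse the equality case by propagating it through the chain and invoking the fixed-point characterization of Theorem~2 to conclude $G(R)=\M$ and $R=F(\M)$, hence $(R,\M)\in\Gamma$ --- is more robust and is closer in spirit to the corrected Bezdek-style convergence arguments the paper cites. Two caveats, both shared with the paper rather than introduced by you: (i) your final step, that a block-coordinatewise minimizer (equivalently, a fixed point of $T$) lies in $\Gamma$, is exactly the content the paper delegates to its ``repairs'' citation, so it is asserted rather than proved in both treatments; and (ii) your claim that (1) is the fuzzifier-$2$ minimizer does not match (1) as literally printed (which uses unsquared distance ratios, i.e.\ the fuzzifier-$3$ update); the minimizer of $\sum_k r_{jk}^2 W_2(\M_k,\D_j)^2$ over the simplex uses squared ratios, so either (1) or $J$ carries a typo --- your argument is the correct one for the cost function actually being descended, and the paper's own appeal to Bezdek's Proposition~1 silently requires the same reading.
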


\begin{proof}The cost function $J$ is continuous, as it's a sum,  product and composition of continuous functions. Furthermore, we have that for any $(R, \M) \not\in \Gamma$,
\begin{align*}
J(T(R,\M)) &= J(F\circ G(R), G(R)) \\
&< J(R, G(R)) \\ 
&< J(R, M),
\end{align*}
where the first inequality is due to Proposition 1 in \cite{BezdekPaper}, and the second inequality comes from the definition of the Fréchet mean. If $(R, \M) \in \Gamma$ then the strict inequalities include equality throughout.
\end{proof}

\begin{thm} For any $(R, \M)$, every convergent subsequence of $\{T^{(l)}(R, \M) : l = 0,1,\dots\}$ tends to a local minimum or saddle point of the cost function $J$.
\end{thm}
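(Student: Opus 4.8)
The strategy is to verify the three hypotheses of Zangwill's Convergence Theorem (Theorem~3) for the update map $T=(F\circ G,\,G)$, with $J$ as the descent function and $\Gamma$ as the solution set, and then to invoke the theorem directly. The set $\Gamma$ was defined precisely so that its points are local minima or saddle points of $J$, so a convergent subsequence of $\{T^{(l)}(R,\M)\}_{l\ge 0}$ tending to a point of $\Gamma$ is exactly the claimed conclusion; moreover such convergent subsequences exist because the whole sequence lies in a compact set. Hypothesis~(i), that the iterates remain in a compact $S=[0,1]^{nc}\times\mathrm{conv}(\D)^c$, is Lemma~1, and hypothesis~(ii), that $J$ is a descent function for $T$, is Lemma~2. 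Hence the only outstanding work is hypothesis~(iii): that $T$ is closed on $X\setminus\Gamma$.

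I would reduce closedness of $T$ to properties of its two constituents. Because the ambient set $S$ is compact (Lemma~1), the composition of a continuous function with a closed point-to-set map is closed and a ``diagonal pairing'' of two closed maps is closed, so it suffices to show that the membership update $F$ of~(1) is continuous and that the weighted Fréchet mean update $G$ of~(2) is closed as a point-to-set map. For $F$: it is a rational function of the quantities $W_2(\M_l,\D_j)$, each continuous in $\M_l$ since $W_2$ is continuous on $\DL2$, so $F$ is continuous at every configuration in which no cluster centre $\M_l$ coincides with a data diagram $\D_j$. The coincidence configurations are the familiar FCM degeneracy, handled as in \citet{BezdekPaper} by excluding them (they carry a special update and in any case are not points at which the iteration of $T$ can stall).

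The crux, and where I expect the real difficulty, is closedness of $G$. In Euclidean FCM the analogous step is trivial because the centroid update is a single continuous map of the memberships; here $G$ is genuinely set-valued, since several optimal matchings between a centre $\M_k$ and the diagrams $\{\D_j\}$ can coexist, and crossing the locus where the optimal matching changes can make the weighted Fréchet mean jump. The plan is to prove $G$ is a closed correspondence by a Berge/maximum-theorem argument: the objective $\hat{\D}\mapsto\sum_{j=1}^n r_{jk}^2\,W_2(\hat{\D},\D_j)^2$ in~(2) is jointly continuous in $(R,\hat{\D})$ because $W_2$ is continuous, and by Lemma~1 the minimisation may be restricted to the compact set $\mathrm{conv}(\D)$, so the $\arg\min$ correspondence is closed, i.e. if $R^{(\ell)}\to R$, $\M^{(\ell)}\in G(R^{(\ell)})$ and $\M^{(\ell)}\to\M$, then $\M\in G(R)$. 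One point needs care: the algorithm does not return a global minimiser but the output of the Hungarian-matching iteration, which Theorem~2 identifies as a local minimum of the Fréchet function. To close this gap I would either work directly at the level of~(2), where closedness of $\arg\min$ applies cleanly and Theorem~2 is invoked only to certify that the iteration reaches such a minimiser; or partition the configuration space by the combinatorial type of the optimal matching and note that on each cell $G$ coincides with the continuous weighted-centroid formula~(4) (using continuity of $w\mapsto w_\Delta$), reducing closedness to a finite compatibility check across cell boundaries. With (i)--(iii) in hand, Zangwill's Convergence Theorem yields that every convergent subsequence of $\{T^{(l)}(R,\M)\}_{l\ge 0}$ tends to a point of $\Gamma$, hence to a local minimum or saddle point of $J$, which is the assertion of the theorem; the technical heart is making the closedness of the weighted Fréchet mean update precise despite persistence diagram space having only the weak geodesic structure of \citet{Turner2012FrchetMF} and despite the update being defined through a combinatorial matching rather than a smooth formula.
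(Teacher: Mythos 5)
Your proposal follows the same route as the paper: verify Zangwill's three hypotheses for $T=(F\circ G,G)$ with $J$ as descent function and $\Gamma$ as solution set, citing the compactness lemma (the paper's Lemma~4) for hypothesis~(i) and the descent-function lemma (Lemma~5) for hypothesis~(ii), then apply Theorem~3. Where you diverge is hypothesis~(iii): the paper disposes of closedness in one sentence --- ``since our function $T$ only maps points in the plane to points in the plane, it is a closed map'' --- which is not a real argument (mapping plane points to plane points does not make a correspondence closed). You correctly identify this as the genuine technical gap: $G$ is set-valued because optimal matchings need not be unique, and the weighted Fr\'echet mean can jump across the locus where the optimal matching changes. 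Your proposed fixes --- a Berge/maximum-theorem argument for the $\arg\min$ correspondence of~(2) restricted to the compact set $\mathrm{conv}(\D)$, or a cell decomposition by combinatorial matching type on which~(4) is continuous --- are the right kind of repair, though you should note the subtlety you yourself flag: the algorithm returns a local minimiser via the Hungarian iteration, not the global $\arg\min$, so the Berge argument applies to a correspondence that only contains the algorithm's output, and the closedness claim must be made for the map the algorithm actually computes. You also rightly flag the continuity of $F$ at configurations where some $W_2(\M_l,\D_j)=0$, another degeneracy the paper does not address. In short: same skeleton, but your treatment of closedness is more careful than, and effectively a correction of, the paper's.
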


\begin{proof} We proceed with Zangwill's Convergence Theorem. Our algorithm is the update function $T$, our solution set is $\Gamma$, and our descent function is the cost function $J(R, \M)$. By Lemma 4, every iterate is contained within a compact set. By Lemma 5, $J$ is a descent function. Finally, since our function $T$ only maps points in the plane to points in the plane, it is a closed map. The theorem follows by applying Theorem 3.
\end{proof}

\section{Convergence of the Fréchet mean algorithm}
Recall that the Fréchet mean is computed by finding the $\text{arg}\min$ of 
\begin{equation}
F(\hat{\D}) = \sum_{j=1}^n r_{jk}^2 F_j( \hat{\D}), \text{ with } F_j(\hat{\D}) = W_2(\hat{\D}, \D_j)^2,
\end{equation}
 for fixed $k$. We start by recounting work by \citet{Turner2012FrchetMF}, which this section adapts for the weighted Fréchet mean.\footnote{In \citet{Turner2012FrchetMF}, the Fréchet mean is defined as the arg min of the Fréchet function $F(\hat{\D}) = \int W_2(\hat{\D}, \D_j)^2 d\rho(\hat{\D})$ with the empirical measure $\rho = n^{-1} \sum_{j=1}^n \delta_{\D_j}$. We are using the empirical measure $\rho = \left(\sum_{j=1}^n r_{jk}^2 \right)^{-1} \sum_{j=1}^n r_{jk}^2 \delta_{\D_j}$, but for ease we drop the scalar $\left(\sum_{j=1}^n r_{jk}^2 \right)^{-1}$ as it is positive, so it does not affect the minimum of the function.} The proofs we're adapting use a gradient descent technique to prove local convergence. In order to use their techniques, we need to define a differential structure on the space of persistence diagrams.

By Theorem 2.5 from \citet{Turner2012FrchetMF}, the space of persistence diagrams $\mathscr{D}_{L^2} = \left\{ \D : W_2(\D, \Delta) < \infty \right\}$ is a non-negatively curved Alexandrov space. An optimal bijection $\gamma:\D_1 \to \D_2$ induces a unit-speed geodesic $\phi(t) = \{(1-t)x + t\gamma(x) : x\in \D_1, 0\leq t \leq 1 \}$. For a point $\D \in \DL2$ we define the tangent cone $T_{\D}$. Define $\hat{\Sigma}_{\D}$ as the set of all non-trivial unit-speed geodesics emanating from $\D$.  Let $\phi,\eta \in \hat{\Sigma}_{\D}$ and define the angle between them as \[\angle_{\D}(\phi, \eta) = \arccos\left( \lim_{s,t\downarrow 0} \frac{s^2 + t^2 - W_2( \phi(s), \eta(t))^2 }{2st}  \right)\] in $ [0,\pi]$ when the limit exists. Then the space of directions $(\Sigma_{\D}, \angle_{\D})$ is the completion of $\hat{\Sigma}_{\D} / \sim$ with respect to $\angle_{\D}$, with $\phi \sim \eta \iff \angle_{\D}(\phi, \eta) = 0$. We now define the tangent cone as  \[T_{\D} = (\Sigma_{\D} \times [0,\infty))/(\Sigma_{\D} \times \{0\}).\] Given $u=(\phi, s), v=(\eta,t)$, we define an inner product on the tangent cone by \[\langle u, v\rangle = st \cos \angle_{\D}(\phi, \eta).\] Now, for $\alpha>0$ denote the space $(\DL2, \alpha W_2)$ as $\alpha\DL2$ and define the map $i_\alpha: \alpha\DL2 \to \DL2$. For an open set $\Omega \subset \DL2$ and a function $f:\Omega \to \mathbb{R}$, the differential of $f$ at $\D \in \Omega$ is defined by $d_{\D} f = \lim_{\alpha\to\infty} \alpha(f \circ i_{\D} - f(\D))$. Finally, we say that $s \in T_{\D}$ is a supporting vector of $f$ at $\D$ if $d_{\D}f(x) \leq -\langle s,x\rangle$ for all $x \in T_{\D}$.

\begin{lemma} The following two results are proven in \citet{Turner2012FrchetMF}.
\begin{enumerate}[(i)]
\item Let $\D\in \DL2$. Let $F_{j}(\hat{\D}) = W_2(\hat{\D}, \D_j)^2$. Then if $\phi$ is a distance-achieving geodesic from $\D$ to $\hat{D}$, then the tangent vector to $\phi$ at $\D$ of length $2W_2(\hat{\D}, \D)$ is a supporting vector at $\D$ of $f$.
\item If $\D$ is a local minimum of $f$ and $s$ is a supporting vector of $f$ at $\D$, then $s=0$. 
\end{enumerate}
\end{lemma}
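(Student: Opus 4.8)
The plan is to derive both assertions as first-order statements about the squared-distance functions $F_j(\hat{\D}) = W_2(\hat{\D},\D_j)^2$, using only the structure already in place: that $\DL2$ is a geodesic, non-negatively curved Alexandrov space (Theorem~2.5 of \citet{Turner2012FrchetMF}) in which an optimal bijection between two diagrams induces a distance-achieving geodesic between them, together with the definitions of $\hat{\Sigma}_{\D}$, the angle $\angle_{\D}$, the tangent cone $T_{\D}$ with its inner product, and the differential $d_{\D}$ recalled above. I would prove the two parts in turn; both are standard facts of Alexandrov geometry specialised to this space, which is why they are quoted from \citet{Turner2012FrchetMF}, but it is worth recording the argument.

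For (i), fix the diagram $\D_j$ and write $p = \D_j$, let $\phi$ be a distance-achieving geodesic from $\D$ to $p$, and set $s = (\phi, 2W_2(\D,p)) \in T_{\D}$. Let $v = (\eta,\lambda) \in T_{\D}$ be arbitrary, with $\eta \in \hat{\Sigma}_{\D}$ a unit-speed geodesic and $\lambda \ge 0$. For small $t > 0$ the hinge at $\D$ spanned by $\phi$ and $\eta|_{[0,t]}$ has angle exactly $\angle_{\D}(\phi,\eta)$, so the curvature-$\ge 0$ hinge comparison gives
\[ W_2(\eta(t), p)^2 \;\le\; t^2 + W_2(\D,p)^2 - 2\,t\,W_2(\D,p)\cos\angle_{\D}(\phi,\eta). \]
Dividing by $t$, letting $t \downarrow 0$, and using that the limit defining $d_{\D}F_j$ along $\eta$ agrees with this one-sided slope (semiconcavity of squared-distance functions guarantees the limit exists), we get $d_{\D}F_j\big((\eta,1)\big) \le -2W_2(\D,p)\cos\angle_{\D}(\phi,\eta)$; by $1$-homogeneity of the differential, $d_{\D}F_j(v) = \lambda\, d_{\D}F_j\big((\eta,1)\big) \le -2\lambda W_2(\D,p)\cos\angle_{\D}(\phi,\eta) = -\langle s, v\rangle$, which is precisely the supporting-vector inequality. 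The signs must be handled carefully: in curvature $\ge 0$ the comparison angle is \emph{at most} the true angle, and that is exactly what makes the inequality point in the direction needed.

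For (ii), let $\D$ be a local minimum of $F_j$ (the same argument applies verbatim to any function whose differential is computed by the naive one-sided slope, in particular to the weighted Fréchet function $F = \sum_j r_{jk}^2 F_j$ treated later), let $s$ be a supporting vector at $\D$, and suppose for contradiction $s \ne 0$, say $s = (\phi_s, |s|)$ with $|s| > 0$. Since $\D$ is a local minimum, moving along $\phi_s$ cannot decrease the function, so $d_{\D}F_j\big((\phi_s,1)\big) = \lim_{t\downarrow 0} t^{-1}\big(F_j(\phi_s(t)) - F_j(\D)\big) \ge 0$, hence $d_{\D}F_j(s) = |s|\, d_{\D}F_j\big((\phi_s,1)\big) \ge 0$. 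On the other hand, applying the supporting-vector inequality at $x = s$ gives $d_{\D}F_j(s) \le -\langle s, s\rangle = -|s|^2 < 0$, a contradiction; therefore $s = 0$.

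I expect part (i) to be the only genuinely delicate step: one must confirm that $d_{\D}F_j$ is well defined on all of $T_{\D}$ and that the rescaled limit in the definition $d_{\D}f = \lim_{\alpha\to\infty}\alpha(f\circ i_\alpha - f(\D))$ coincides with the naive directional slope, and then invoke the non-negative-curvature comparison in the correct direction; everything else, including part (ii), is then a short formal consequence. In the appendix I would simply cite the corresponding statements of \citet{Turner2012FrchetMF} for the unweighted $F_j$, the point being that (i) and (ii) are exactly the two ingredients fed into the weighted argument of the next section, where the remaining work is to show that a nonnegative combination of supporting vectors of the $F_j$ is a supporting vector of $F$.
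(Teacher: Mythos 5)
The paper does not actually prove this lemma --- it is imported verbatim from \citet{Turner2012FrchetMF} (their Lemmas on supporting vectors of $W_2(\cdot,\D_j)^2$), so the ``proof'' here is just the citation, and your reconstruction via the curvature-$\geq 0$ hinge comparison for (i) and the contradiction $0 \leq d_{\D}f(s) \leq -\langle s,s\rangle < 0$ for (ii) is exactly the standard argument underlying that reference; both steps are correct as written. One small refinement for (ii): the direction $\phi_s$ of a supporting vector lives in $\Sigma_{\D}$, the \emph{completion} of $\hat{\Sigma}_{\D}/\sim$, so it need not be realised by an actual geodesic along which one can ``move''; to be safe, approximate it by genuine unit-speed geodesics $\eta_n \in \hat{\Sigma}_{\D}$ with $\angle_{\D}(\eta_n,\phi_s)\to 0$ and apply your two inequalities to $(\eta_n,1)$, which for large $n$ gives $0 \leq d_{\D}f\bigl((\eta_n,1)\bigr) \leq -|s|\cos\angle_{\D}(\phi_s,\eta_n) < 0$, the same contradiction.
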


If there is a unique optimal matching $\gamma_{\D_1}^{\D_3} : \D_1 \to \D_3$, we say that it is induced by an optimal matching $\gamma_{\D_1}^{\D_2} : \D_1 \to \D_2$ if there exists a unique optimal matching  $\gamma_{\D_2}^{\D_3} : \D_2 \to \D_3$ such that $\gamma_{\D_1}^{\D_3} = \gamma_{\D_2}^{\D_3} \circ \gamma_{\D_1}^{\D_2}$. Proposition 3.2 from \citet{Turner2012FrchetMF} states that an optimal matching at a point is also locally optimal. In particular, it states the following.

\begin{lemma}Let $\D_1,\D_2\in \DL2$ such that there is a unique optimal matching from $\D_1$ to $\D_2$. Then there exists an $r>0$ such that for every $\D_3 \in B_{W_2}(\D_2, r)$, there is a unique optimal pairing from $\D_2$ to $\D_3$ that is induced by the matching from $\D_1$ to $\D_2$. 
\end{lemma}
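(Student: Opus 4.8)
This is Proposition~3.2 of \citet{Turner2012FrchetMF}, so the plan is to adapt their perturbation argument, which I outline here. The key structural fact is that an optimal pairing between two persistence diagrams is determined by a finite amount of combinatorial data: which off-diagonal points of one diagram are matched to off-diagonal points of the other, all remaining off-diagonal points being sent to their own orthogonal projections onto $\Delta$. For diagrams with finitely many off-diagonal points (the case relevant to us) there are only finitely many such \emph{matching types}, and each carries a definite squared transport cost. Uniqueness of the optimal pairing $\gamma_{12}\colon\D_1\to\D_2$ therefore means that its type strictly beats every other type, so there is a gap $\delta>0$ between its cost $W_2(\D_1,\D_2)^2$ and the cost of the second-best type.

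Now fix $\D_3\in B_{W_2}(\D_2,r)$ and let $\mu\colon\D_2\to\D_3$ be an optimal pairing. The two estimates I would establish — each by the triangle inequality in the plane followed by Minkowski's inequality over the matched pairs — are: (a) $\mu\circ\gamma_{12}$ is a cheap $\D_1\to\D_3$ matching, with $\mathrm{cost}_{\D_3}(\mu\circ\gamma_{12})\le\bigl(W_2(\D_1,\D_2)+W_2(\D_2,\D_3)\bigr)^2$; and (b) projecting any $\D_1\to\D_3$ matching $\tau$ back through $\mu^{-1}$ cannot inflate its cost by more than $O(W_2(\D_2,\D_3))$, namely $\mathrm{cost}_{\D_2}(\mu^{-1}\circ\tau)\le\bigl(\sqrt{\mathrm{cost}_{\D_3}(\tau)}+W_2(\D_2,\D_3)\bigr)^2$. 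Apply (b) to an optimal $\tau^*\colon\D_1\to\D_3$: if the matching type of $\mu^{-1}\circ\tau^*$ differed from that of $\gamma_{12}$, then the gap gives $\mathrm{cost}_{\D_2}(\mu^{-1}\circ\tau^*)\ge W_2(\D_1,\D_2)^2+\delta$, while (a), (b) and optimality of $\tau^*$ give $\sqrt{W_2(\D_1,\D_2)^2+\delta}\le\sqrt{\mathrm{cost}_{\D_3}(\tau^*)}+r\le W_2(\D_1,\D_2)+2r$, i.e. $\delta\le 4rW_2(\D_1,\D_2)+4r^2$ — impossible once $r$ is chosen (depending only on $\D_1,\D_2$) small enough that $4rW_2(\D_1,\D_2)+4r^2<\delta$. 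Hence for such $r$ every optimal $\D_1\to\D_3$ matching equals $\mu\circ\gamma_{12}$; in particular it is unique. Running this for two optimal pairings $\mu,\mu'\colon\D_2\to\D_3$ forces $\mu\circ\gamma_{12}=\mu'\circ\gamma_{12}$, hence $\mu=\mu'$ after precomposing with $\gamma_{12}^{-1}$, so $\mu$ is the unique optimal $\D_2\to\D_3$ pairing and $\mu\circ\gamma_{12}$ is the unique optimal $\D_1\to\D_3$ matching — which is exactly the assertion that the optimal $\D_2\to\D_3$ pairing is induced by $\gamma_{12}$ in the sense defined above.

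The main obstacle is the bookkeeping around the diagonal rather than the estimates themselves. A perturbed $\D_3$ may carry off-diagonal points lying arbitrarily close to $\Delta$ with no counterpart among the off-diagonal points of $\D_2$; in an optimal pairing these are matched to the diagonal, and under $\mu^{-1}$ they collapse onto it, so the triangle-inequality estimates and the notion of matching type must be set up to remain valid as points migrate between off-diagonal and diagonal status, and one must also confirm that an optimal pairing $\D_2\to\D_3$ is genuinely attained in $\DL2$. Finiteness of the type set, and hence existence of the gap $\delta$, relies on only finitely many off-diagonal points carrying non-negligible mass; for general $\DL2$ diagrams with infinitely many off-diagonal points one first truncates the points near $\Delta$ and then passes to the limit, exactly as in \citet{Turner2012FrchetMF}.
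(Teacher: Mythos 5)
The paper does not actually prove this lemma: it is stated verbatim as Proposition~3.2 of \citet{Turner2012FrchetMF} and imported as an external result. Your proposal correctly identifies that source, and your reconstruction of its perturbation argument --- finitely many matching types, a strict cost gap $\delta>0$ enjoyed by the unique optimal type, and two triangle-inequality estimates showing that perturbing $\D_2$ by $r$ shifts costs by at most $4rW_2(\D_1,\D_2)+4r^2<\delta$ --- is sound and consistent with that proof, with the diagonal bookkeeping rightly flagged as the only delicate point.
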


The following theorem proves that our algorithm converges to a local minimum of the Fréchet function.

\begin{thm} Given diagrams $\D_j$, membership values $r_{jk}$, and the Fréchet function $F$ defined in (5), then $\M_k = \{ y^{(i)} \}_{i=1}^m$ is a local minimum of $F$ if and only if there is a unique optimal pairing from $\M_k$ to each of the $\D_j$, denoted $\gamma_j$, and each $y^{(i)}$ is updated via (4).
\end{thm}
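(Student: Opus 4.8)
The plan is to adapt Turner et al.'s characterisation of (unweighted) Fréchet means to the weighted functional $F(\hD)=\sum_{j=1}^{n}r_{jk}^{2}F_{j}(\hD)$ with $F_{j}(\hD)=W_{2}(\hD,\D_{j})^{2}$, proving both implications. The only place where the weights force genuinely new work is in assembling the per-diagram supporting vectors into a single supporting vector for $F$; everything else localises to a convexity argument in the plane, essentially as in the unweighted case.

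\emph{For the ``if'' direction,} assume each optimal pairing $\gamma_{j}\colon\M_{k}\to\D_{j}$ is unique and every $y^{(i)}$ equals the right-hand side of (4). I would apply Lemma 8 with $\D_{1}=\D_{j}$, $\D_{2}=\M_{k}$ to obtain $r>0$ such that for every $\M'\in B_{W_{2}}(\M_{k},r)$ the optimal pairing $\M'\to\D_{j}$ is induced by $\gamma_{j}$; for $r$ small this pairs $y'^{(i)}$ with $\gamma_{j}(y^{(i)})$ when the latter is off-diagonal and with the diagonal otherwise, any extra near-diagonal points of $\M'$ contributing only negligible, nonnegative cost. Hence $F(\M')\ge C+\sum_{i=1}^{m}g_{i}(y'^{(i)})$ on $B_{W_{2}}(\M_{k},r)$ for a constant $C$, with equality at $\M_{k}$, where $g_{i}(y)=\sum_{j\in\mathscr{J}^{(i)}_{\mathrm{OD}}}r_{jk}^{2}\,\|y-\gamma_{j}(y^{(i)})\|^{2}+\big(\sum_{j\in\mathscr{J}^{(i)}_{\mathrm{D}}}r_{jk}^{2}\big)\,\|y-\pi_{\Delta}(y)\|^{2}$ is a nonnegative combination of convex quadratics, decoupled across $i$. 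Since the projection $\pi_{\Delta}$ onto the diagonal is linear and idempotent, solving $\nabla g_{i}(y^{(i)})=0$ gives exactly the update (4) (and the degenerate case $\mathscr{J}^{(i)}_{\mathrm{OD}}=\emptyset$ forces $y^{(i)}\in\Delta$), so $y^{(i)}$ minimises $g_{i}$ and $\M_{k}$ minimises $F$ near $\M_{k}$.

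\emph{For the ``only if'' direction,} suppose $\M_{k}$ is a local minimum of $F$. For each $j$ pick a distance-achieving geodesic $\phi_{j}$ from $\M_{k}$ to $\D_{j}$; by Lemma 7(i) its tangent vector $s_{j}\in T_{\M_{k}}$ of length $2W_{2}(\M_{k},\D_{j})$ is a supporting vector of $F_{j}$ at $\M_{k}$. The key step (below) is that $s:=\sum_{j=1}^{n}r_{jk}^{2}s_{j}$ is then a supporting vector of $F$, so Lemma 7(ii) forces $s=0$. If some $\D_{j_{0}}$ with $r_{j_{0}k}>0$ admitted two genuinely distinct optimal pairings, the resulting distinct directions $s_{j_{0}}\ne s_{j_{0}}'$ would give two supporting vectors of $F$ that are both $0$, forcing $s_{j_{0}}=s_{j_{0}}'$ --- a contradiction; hence all pairings are unique. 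Uniqueness then places us back in the local-convexity regime of the ``if'' direction, where a local minimum of $F$ must minimise each $g_{i}$, i.e.\ each $y^{(i)}$ is updated via (4).

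\emph{The main obstacle} is exactly the claim that $s=\sum_{j}r_{jk}^{2}s_{j}$ is a supporting vector of $F$: the tangent cone $T_{\M_{k}}$ is only an Alexandrov cone, so neither the combination $\sum_{j}r_{jk}^{2}s_{j}$ nor the identity $\langle\sum_{j}r_{jk}^{2}s_{j},v\rangle=\sum_{j}r_{jk}^{2}\langle s_{j},v\rangle$ is automatic. Following Turner et al., I would resolve this by noting that every $s_{j}$ --- being the direction of a distance-achieving geodesic out of $\M_{k}=\{y^{(i)}\}_{i=1}^{m}$ --- lies in the common flat $(\mathbb{R}^{2})^{m}$-subcone of $T_{\M_{k}}$ obtained by translating the $m$ points independently (diagonal-matched points moved toward $\Delta$); on this subcone the cone inner product restricts to an honest inner product and $s$ is well defined. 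Combining this with additivity of the differential, $d_{\M_{k}}F=\sum_{j}r_{jk}^{2}\,d_{\M_{k}}F_{j}$, and the defining inequalities $d_{\M_{k}}F_{j}(v)\le-\langle s_{j},v\rangle$ yields $d_{\M_{k}}F(v)\le-\langle s,v\rangle$ for all $v\in T_{\M_{k}}$, as required. Replacing the uniform weights $n^{-1}$ by $r_{jk}^{2}$, left unnormalised since (as the text notes) this does not move minimisers, changes nothing else in the argument --- which is the precise sense in which weighting is routine once the flatness observation is available.
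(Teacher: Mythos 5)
Your proposal is correct and follows essentially the same route as the paper: Lemma 7 supplies per-diagram supporting vectors whose weighted sum must vanish at a local minimum (yielding update (4) and uniqueness), while Lemma 8 plus the per-point decomposition $F(\M_k)=\sum_{i}\sum_{j}r_{jk}^2\|y^{(i)}-x_j^{(i)}\|^2$ gives the converse. If anything you are more careful than the paper on two points it glosses over --- justifying the linearity $\sum_j r_{jk}^2\langle s_j,\cdot\rangle=\langle\sum_j r_{jk}^2 s_j,\cdot\rangle$ via the flat subcone of $T_{\M_k}$, and deriving uniqueness by varying one pairing at a time (with the honest caveat that this needs $r_{jk}>0$).
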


\begin{proof} First assume that $\gamma_j$ are optimal pairings from $\M_k$ to each $\D_j$, and let $s_j$ be the vectors in $T_{\M_k}$ that are tangent to the geodesics induced by $\gamma_j$ and are distance-achieving. Then by Lemma 7(i), each $2s_j$ is a supporting vector for the function $F_j$. Furthermore, $2\sum_{j=1}^n r_{jk}^2 s_j$ is a supporting vector for $F$, as for any $\hD$,
\begin{align*}
d_{\M_k} F(\hat{\D}) &= d_{\M_k}\left( \sum_{j=1}^n r_{jk}^2 F_j(\hat{\D}) \right) =  \sum_{j=1}^n r_{jk}^2 d_{\M_k} F_j(\hat{\D}) \\ &\leq \sum_{j=1}^n - r_{jk}^2 \langle2s_j, \hD \rangle = - \left\langle 2 \sum_{j=1}^n r_{jk}^2 s_j, \hD \right\rangle.
\end{align*}
By Lemma 7(ii), $ 2 \sum_{j=1}^n r_{jk}^2 s_j = 0$. Putting $s_j =  \gamma_j(y^{(i)}) - y^{(i)}$ and rearranging gives that $y^{(i)}$ updates via (4), as required. Note that when $\gamma_j(y^{(i)}) = \Delta$, we let $\gamma_j(y^{(i)}) = w_\Delta$ as defined in (4), because this minimises the transportation cost to the diagonal. Now suppose that $\gamma_j$ and $\tilde{\gamma}_j$ are both optimal pairings. Then by the above argument $ \sum_{j=1}^n r_{jk}^2 s_j =  \sum_{j=1}^n r_{jk}^2 \tilde{s}_j = 0$, implying that $s_j = \tilde{s}_j$ and so $\gamma_j = \tilde{\gamma}_j$. Therefore the optimal pairing is unique.

To prove the opposite direction, assume that $\M_k = \{y^{(i)}\}$ locally minimises the Fréchet function $F$. Observe that for a fixed bijection $\gamma_j$, we have that

\begin{align*}
F(\M_k) &= \sum_{j=1}^n r_{jk}^2 W_2(\M_k, \mathbb{D}_j)^2\\
&= \sum_{j=1}^n r_{jk}^2 \left(\inf_{\gamma_j: \hat{M} \to \D_j}  \sum_{y \in \M_k} || y - \gamma_j(y) ||^2\right)\\
&= \sum_{j=1}^n r_{jk}^2 \sum_{i=1}^m ||y^{(i)} - x_j^{(i)} ||^2\\
&=\sum_{i=1}^m \left(\sum_{j=1}^n r_{jk}^2 ||y^{(i)} - x_j^{(i)} ||^2\right).
\end{align*}

The final term in brackets is non-negative, and minimised exactly when $y^{(i)}$ is updated via (4).  Furthermore, the unique optimal pairing from $\M_k$ to each of the $\D_j$'s is the same for every $\hat{\M}$ within the ball $B_{W_2}(\M_k, r)$ for some $r>0$, by Lemma 8. Therefore, if $\M_k$ is a local minimum of $F$, then the $y^{(i)}$'s are equal to the values found by taking the optimal pairings $\gamma_j$ and calculating the weighted means of $\gamma_j(y^{(i)})$ with the weights $r_{jk}^2$, as required. It will remain a minimum as long as the matching stays the same, which happens in the ball $B_{W_2}(\M_k, r)$, so we are done.
\end{proof}

\section{Experimental details}

\subsection{Synthetic data}

\textbf{Membership values.} The membership values for the synthetic datasets are in Table 1. Datasets 1-3 are the datasets of noise, datasets 4-6 are the datasets with one ring, and datasets 7-9 are the datasets with two rings. We ran our algorithm for 20 iterations. The code to generate the dataset is available in the supplementary materials.

\begin{table*}[ht]
\centering
\tabcolsep=0.15cm
\caption{Membership values for the synthetic dataset\linebreak}
\begin{tabular}{lccccccccc}
\toprule
Dataset & 1 & 2 & 3 & 4 & 5 & 6 & 7 & 8 & 9 \\\midrule 
Cluster 1 & 0.6336 & 0.5730 & 0.5205 & 0.2760 & 0.2503 & 0.1974 & 0.2921 & 0.2128 & 0.2292 \\
Cluster 2 & 0.1768 & 0.2057 & 0.2327 & 0.5361 & 0.5329 & 0.6371 & 0.2452 & 0.2291 & 0.1822 \\
Cluster 3 & 0.1900 & 0.2212 & 0.2468 & 0.1879 & 0.2169 & 0.1655 & 0.4627 & 0.5580 & 0.5885 \\ \bottomrule
\end{tabular}
\end{table*}

\textbf{Timing experiments.} For the timing experiments we divide the total number of points equally between four distributions, two of which are noise and two of which are shaped in a ring. Each clustering algorithm was run for five iterations on one core of a 2018 MacBook Pro with a 1.4GHz Intel Core i5. We included the time taken to compute the persistence diagrams in the running times for our algorithm.

We also use synthetic data to empirically compare the running time of our algorithm to other dataset clustering algorithms available. Computing the Wasserstein distance has super-cubic time complexity \cite{emd}, so is a significant bottleneck both for our algorithm and comparable Wasserstein barycentre clustering algorithms  \cite{IBP, sinkhorn, linprog, ADMM, BADDM}. Persistence diagrams generally reduce both the dimensionality and number of points in a dataset,\footnote{Persistence diagrams are always planar, so if the data is in $\mathbb{R}^d, d>2,$ then there is a dimensionality reduction. For $p>0$, the persistence diagram of $p$-PH always has less points than the dataset when computed with the Vietoris-Rips complex.} so we in turn reduce the computational bottleneck. To demonstrate this, we evaluated the average time per iteration of our persistence diagram clustering algorithm, as well as the average iteration time for comparable Wasserstein barycentre clustering algorithms. We included the time taken to compute the persistence diagrams from the datasets when timing our clustering algorithm. We give the results in Table 2, leaving an entry blank where it became unpractical to run a test (e.g. it takes too long to return a solution and the algorithm becomes unresponsive). We show at least an order of magnitude improvement in performance over comparable Wasserstein barycentre clustering algorithms.

\textbf{Empirical performance experiments.} We get empirical results on the convergence of a total of 825 randomly generated persistence diagrams. Following Euclidean fuzzy clustering, we denote convergence when the cost function is stable to within $\pm0.5\%$. The WFM converges when the matching remains stable, which we proves does happen. We use the seeds 0, 1, and 2 respectively for our repeats. 

We implement the Fuzzy RAND index \citep{frand}, available in the supplementary materials. We use the same synthetic dataset as before to evaluate our cluster quality, with the origin of the data (noise, one hole, or two holes) as a reference partition. We used Persim\footnote{\url{https://persim.scikit-tda.org/en/latest/}} to compute the additional distances.

\begin{table*}
\centering
\tabcolsep=0.16cm
\caption{Seconds per clustering iteration\linebreak}
\begin{tabular}{lcccccccccc}
\toprule
Points & 100 & 200 & 300 & 400 & 500 & 600 & 700 & 800 & 900 & 1000 \\ \midrule

\textbf{FPDCluster}              & \textbf{0.01552} & \textbf{0.1975} & \textbf{0.9358} & \textbf{2.229} & \textbf{5.694} & \textbf{12.29} & \textbf{19.27} & \textbf{34.50} & \textbf{53.20} & \textbf{77.81} \\
ADMM                   & 5.622 & 34.86 & 161.3 & 617.6 & - & - & - & - & - & - \\
BADMM                  & 0.2020 & 2.188 & 26.38 & 112.6 & - & - & - & - & - & - \\
SubGD                  & 0.4217 & 2.273 & 22.17 & 103.4 & - & - & - & - & - & - \\
IterBP                 & 0.3825 & 2.226 & 21.57 & 108.9 & - & - & - & - & - & - \\
LP                     & 0.3922 & 2.031 & 22.32 & 117.3 & - & - & - & - & - & - \\\bottomrule
\end{tabular}
\end{table*}

\subsection{Lattice structures}
 
Table 3 gives an overview of the clustering results, with specific values available in Tables 4-7. The fuzzy values for FPDCluster are given as floats, although in each case they converged to an absolute cluster. The Wasserstein barycentre clustering algorithms each have discrete labels. The correct labellings are for 1-3 and 4-6 to be clustered together in each case. We clustered the 2-PH diagrams. We denote a label as having been assigned by 1, or not assigned by 0. We ran each algorithm for five iterations. We obtained our datasets as cif files, converted them to xyz files, and then to csv files, producing a list of the coordinates of each atom in $\mathbb{R}^3$. We create three copies of each structure. For rotation, we rotate two of them by $180^{\mathrm{o}}$ around different axes. For reflection, we reflect two of them in different axes. For translation, we translate them up or down by the length of the unit-cell. We use our own python implementation of FPDCluster, available in the supplementary materials. For each of the other algorithms, we use the implementation provided at \url{https://github.com/bobye/WBC_Matlab}, a copy of which is in the supplementary materials. We do not limit the number of points in the diagram when clustering.

\subsection{Decision boundaries}

\textbf{Why hard clustering does not work.} In order to assign each task to the top-ranked models, we need to have a path from a task to the nearest cluster centre, then from that cluster centre to the $k$-nearest models (note that when we refer to models/tasks, we're implicitly referring to the persistence diagram of their decision boundary). We can always find that route when fuzzy clustering, as the fractional membership values mean that we have information about the proximity of every model/task with every cluster centre. However, with hard clustering we cannot always find that route. Firstly, the hard labelling means that you lose a lot of information about the proximity of models/tasks to cluster centres. Therefore, in order to find a route, we need a every task to be assigned to a cluster centre that also has a model assigned to it. However, there are no guarantees that will happen. We show an example where no path exists in Figure 7.

\begin{figure*}
\centering
\includegraphics[width=1\linewidth]{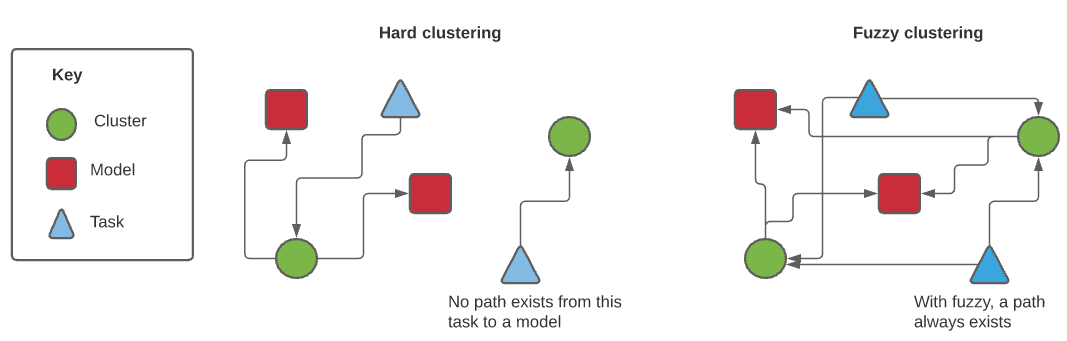}  
\caption{With hard clustering, we cannot always find a path from a task to a model.}
\label{fig:allotropes}
\end{figure*}

\textbf{Experimental details.} All code used for computation is available in the supplementary materials. For models, we trained the standard Pytorch CNN available at \url{https://github.com/pytorch/examples/blob/master/mnist/main.py}. We trained them on MNIST, FashionMNIST, and KMNIST, each obtained using the Torchvision.datasets package. We split the data into 9 binary datasets for classification, class 0 vs each of the remaining classes. We trained three of each model, seeded with 0, 1, and 2 respectively. MNIST and KMNIST were each trained for five epochs, FashionMNIST was trained for 14 epochs. Our train:test split was 6:1, as is standard for MNIST structured datasets. We used Ripser to compute the $1$-persistence diagrams using default settings. We limited the number of points in the diagram to the 25 most persistent when clustering. Our percentage improvement values use the membership values after 16 iterations. We compute the standard error bounds when calculating the percentage improvement.
\begin{table*}[ht]
\centering
\caption{Clustering results after transformation\linebreak}
\begin{tabular}{lccccccccc}
\toprule
 & \multicolumn{4}{c}{Cubic Structures} & & \multicolumn{4}{c}{Carbon Allotropes} \\ \cline{2-5} \cline{7-10}
 & None & Rotate & Reflect & Translate & & None & Rotate & Reflect & Translate \\ \midrule 
\textbf{FPDCluster}               & \bcmark  & \bcmark       & \bcmark     & \bcmark  & & \bcmark  & \bcmark       & \bcmark     & \bcmark       \\
ADMM                    & \cmark  & \xmark        & \cmark      & \xmark  &&\cmark&\xmark&\xmark&\xmark       \\
BADMM                   & \cmark  & \xmark        & \cmark      & \xmark  &&\cmark&\xmark&\xmark&\xmark       \\
SubGD                   & \cmark  & \xmark        & \cmark      & \xmark  &&\cmark&\xmark&\xmark&\xmark       \\
IterBP                  & \cmark  & \xmark        & \cmark      & \xmark  &&\cmark&\xmark&\xmark&\xmark       \\
LP                      & \cmark  & \xmark        & \cmark      & \xmark  &&\cmark&\xmark&\xmark&\xmark       \\\bottomrule
\end{tabular}
\end{table*}
 
\begin{table*}[ht]
\centering
\tabcolsep=0.09cm
\caption{Membership values for non-transformed datasets\linebreak}
\small
\begin{tabular}{llccccccccccccc}
\toprule
 & &\multicolumn{6}{c}{Cubic Structure Datasets} & & \multicolumn{6}{c}{Carbon Allotrope Datasets} \\
\cline{3-8}\cline{10-15}
&  & 1 & 2 & 3 & 4 & 5 & 6 & & 1 & 2 & 3 & 4 & 5 & 6 \\\midrule 
FPDCluster & Cluster 1 &1.000&1.000&1.000&0.000&0.000&0.000&&1.000&1.000&1.000&0.000&0.000&0.000 \\
& Cluster 2 &0.000&0.000&0.000&1.000&1.000&1.000&&0.000&0.000&0.000&1.000&1.000&1.000  \\ \midrule 
ADMM & Cluster 1 &1 &1&1&0&0&0&&1 &1&1&0&0&0 \\
& Cluster 2 &0&0&0&1&1&1&&0&0&0&1&1&1  \\ \midrule 
BADMM & Cluster 1 &1 &1&1&0&0&0&&1 &1&1&0&0&0 \\
& Cluster 2 &0&0&0&1&1&1&&0&0&0&1&1&1  \\ \midrule 
SubGD & Cluster 1 &1 &1&1&0&0&0&&1 &1&1&0&0&0 \\
& Cluster 2 &0&0&0&1&1&1&&0&0&0&1&1&1  \\ \midrule 
IterBP & Cluster 1 &1 &1&1&0&0&0&&1 &1&1&0&0&0 \\
& Cluster 2 &0&0&0&1&1&1&&0&0&0&1&1&1  \\ \midrule 
LP & Cluster 1 &1 &1&1&0&0&0&&1 &1&1&0&0&0 \\
& Cluster 2 &0&0&0&1&1&1&&0&0&0&1&1&1  \\ \bottomrule 
\end{tabular}
\end{table*}

\begin{table*}[ht]
\centering
\tabcolsep=0.09cm
\caption{Membership values for rotated datasets\linebreak}
\small
\begin{tabular}{llccccccccccccc}
\toprule
 & &\multicolumn{6}{c}{Cubic Structure Datasets} & & \multicolumn{6}{c}{Carbon Allotrope Datasets} \\
\cline{3-8}\cline{10-15}
&  & 1 & 2 & 3 & 4 & 5 & 6 & & 1 & 2 & 3 & 4 & 5 & 6 \\\midrule 
FPDCluster & Cluster 1 &1.000&1.000&1.000&0.000&0.000&0.000&&1.000&1.000&1.000&0.000&0.000&0.000 \\
& Cluster 2 &0.000&0.000&0.000&1.000&1.000&1.000&&0.000&0.000&0.000&1.000&1.000&1.000  \\ \midrule 
ADMM & Cluster 1 & 0&0&1&0&0&1&&1&0&1&1&0&1 \\
& Cluster 2 &1&1&0&1&1&0&&0&1&0&0&1&0 \\ \midrule 
BADMM & Cluster 1 &0&1&1&0&1&1&&1&1&0&1&1&0 \\
& Cluster 2 &1&0&0&1&0&0&&0&0&1&0&0&1  \\ \midrule 
SubGD & Cluster 1 &0&1&1&0&1&1&&1&0&0&1&0&0 \\
& Cluster 2 &1&0&0&1&0&0&&0&1&1&0&1&1  \\ \midrule 
IterBP & Cluster 1 &0&1&0&0&1&0&&0&1&1&0&1&1 \\
& Cluster 2 &1&0&1&1&0&1&&1&0&0&1&0&0 \\ \midrule 
LP & Cluster 1 & 1&0&1&1&0&1&&0&1&0&0&1&0 \\
& Cluster 2 &0&1&0&0&1&0&&1&0&1&1&0&1 \\\bottomrule
\end{tabular}
\end{table*}

\begin{table*}[ht]
\centering
\tabcolsep=0.09cm
\caption{Membership values for reflected datasets\linebreak}
\small
\begin{tabular}{llccccccccccccc}
\toprule
 & &\multicolumn{6}{c}{Cubic Structure Datasets} & & \multicolumn{6}{c}{Carbon Allotrope Datasets} \\
\cline{3-8}\cline{10-15}
&  & 1 & 2 & 3 & 4 & 5 & 6 & & 1 & 2 & 3 & 4 & 5 & 6 \\\midrule 
FPDCluster & Cluster 1 &1.000&1.000&1.000&0.000&0.000&0.000&&1.000&1.000&1.000&0.000&0.000&0.000 \\
& Cluster 2 &0.000&0.000&0.000&1.000&1.000&1.000&&0.000&0.000&0.000&1.000&1.000&1.000  \\ \midrule 
ADMM & Cluster 1 & 1&1&1&0&0&0&&0&0&0&1&1&0 \\
& Cluster 2 &0&0&0&1&1&1&&1&1&1&0&0&1  \\ \midrule 
BADMM & Cluster 1 & 1&1&1&0&0&0&&0&0&0&1&1&0 \\
& Cluster 2 &0&0&0&1&1&1&&1&1&1&0&0&1  \\ \midrule 
SubGD & Cluster 1 & 1&1&1&0&0&0&&1&1&1&1&1&0 \\
& Cluster 2 &0&0&0&1&1&1&&0&0&0&0&0&1  \\ \midrule 
IterBP & Cluster 1 & 1&1&1&0&0&0&&0&0&0&0&0&1 \\
& Cluster 2 &0&0&0&1&1&1&&1&1&1&1&1&0 \\ \midrule 
LP & Cluster 1 & 1&1&1&0&0&0&&0&0&0&0&1&1 \\
& Cluster 2 &0&0&0&1&1&1&&1&1&1&1&0&0  \\ \bottomrule 
\end{tabular}
\end{table*}

\begin{table*}[ht]
\centering
\tabcolsep=0.09cm
\caption{Membership values for translated datasets\linebreak}
\small
\begin{tabular}{llccccccccccccc}
\toprule
 & &\multicolumn{6}{c}{Cubic Structure Datasets} & & \multicolumn{6}{c}{Carbon Allotrope Datasets} \\
\cline{3-8}\cline{10-15}
&  & 1 & 2 & 3 & 4 & 5 & 6 & & 1 & 2 & 3 & 4 & 5 & 6 \\\midrule 
FPDCluster & Cluster 1 &1.000&1.000&1.000&0.000&0.000&0.000&&1.000&1.000&1.000&0.000&0.000&0.000 \\
& Cluster 2 &0.000&0.000&0.000&1.000&1.000&1.000&&0.000&0.000&0.000&1.000&1.000&1.000  \\ \midrule 
ADMM & Cluster 1 &0&0&1&0&0&1&&0&0&1&0&0&1 \\
& Cluster 2 &1&1&0&1&1&0&&1&1&0&1&1&0  \\ \midrule 
BADMM & Cluster 1 &0&1&0&1&1&0&&1&1&0&1&1&0 \\
& Cluster 2 &1&0&1&0&0&1&&0&0&1&0&0&1  \\ \midrule 
SubGD & Cluster 1 & 0&0&1&0&0&1&&0&1&0&0&1&0 \\
& Cluster 2 &1&1&0&1&1&0&&1&0&1&1&0&1  \\ \midrule 
IterBP & Cluster 1 & 0&1&0&0&1&0&&0&0&1&0&0&1 \\
& Cluster 2 &1&0&1&1&0&1&&1&1&0&1&1&0  \\ \midrule 
LP & Cluster 1 &0&0&1&0&0&1&&1&0&1&1&0&1 \\
& Cluster 2 &1&1&0&1&1&0&&0&1&0&0&1&0 \\\bottomrule
\end{tabular}
\end{table*}

\end{document}